\documentclass[11pt]{article}
\usepackage[margin=1in]{geometry}
\usepackage[T1]{fontenc}
\usepackage{times}
\usepackage{microtype}
\usepackage[authoryear,round]{natbib}

\usepackage{amsmath,amsfonts,bm}

\def\eqref#1{equation~\ref{#1}}

\def\1{\bm{1}}

\DeclareMathAlphabet{\mathsfit}{\encodingdefault}{\sfdefault}{m}{sl}
\SetMathAlphabet{\mathsfit}{bold}{\encodingdefault}{\sfdefault}{bx}{n}

\usepackage{graphicx}
\usepackage{amsmath,amssymb,amsthm}
\usepackage{mathtools}
\usepackage{bm}
\usepackage{booktabs}
\usepackage{subcaption}
\usepackage{enumitem}
\usepackage{algorithm}
\usepackage{algpseudocode}
\usepackage{flafter}
\usepackage{placeins}
\usepackage{wrapfig}

\usepackage[hidelinks]{hyperref}
\usepackage{url}
\usepackage[nameinlink,capitalize]{cleveref}
\makeatletter
\providecommand{\theHALG@line}{}
\renewcommand{\theHALG@line}{\thealgorithm.\arabic{ALG@line}}
\makeatother
\newtheorem{theorem}{Theorem}

\newtheorem{Remark}{Remark}

\title{Functional Autoencoders for Amplitude-Phase Representation Learning}
\author{
Peida Wu\thanks{These authors contributed equally. Wu began this work at IMS, ShanghaiTech University, before moving to his current affiliation.}\hspace{0.45em}$^{1}$,\quad
Xinyang Xiong\footnotemark[1]\hspace{0.45em}$^{2}$,\quad
Pengcheng Zeng\thanks{Corresponding author: \texttt{zengpch@shanghaitech.edu.cn}.}\hspace{0.45em}$^{2}$\\[0.6em]
{\small $^{1}$School of Engineering and Applied Science,}\\
{\small University of Pennsylvania, Philadelphia, PA, USA}\\[0.3em]
{\small $^{2}$Institute of Mathematical Sciences, ShanghaiTech University, Shanghai, China}
}
\date{}

\begin{document}
\maketitle
\raggedbottom

\begin{abstract}
Functional data are intrinsically infinite-dimensional, and often exhibit phase
variation, where corresponding events occur at different times across
observations. Existing linear dimension reduction methods struggle with
nonlinear amplitude variation, while functional autoencoders without an explicit
warp entangle temporal misalignment with shape. We propose the \textbf{Amplitude--Phase Functional Autoencoders (AP-FAE)}, an
unsupervised framework for functional data that spans both univariate and
multivariate cases, with emphasis on the multivariate setting, and factorizes
the latent space into separate amplitude and phase embeddings derived from all
channels. A
smooth functional decoder reconstructs channel-specific amplitude functions in
canonical time, and a shared monotone, endpoint-preserving warp captures phase
variation. We prove a bound linking amplitude recovery to registration,
reconstruction, and noise errors, and validate it numerically. Across synthetic
data and six real-world benchmarks, AP-FAE outperforms state-of-the-art
baselines on most clustering and alignment metrics and on all reconstruction
metrics. Clustering with amplitude embeddings alone consistently surpasses
joint amplitude--phase clustering, confirming the benefit of explicit
disentanglement. Code is available at
\href{https://anonymous.4open.science/r/APFAE-418C/}{https://anonymous.4open.science/r/APFAE-418C/}.
\end{abstract}

\section{Introduction}
\label{sec:introduction}

Functional data are functions over a continuous domain, such as
physiological signals, growth 
\begin{figure}[!tb]
    \centering
    \includegraphics[width=0.42\linewidth]{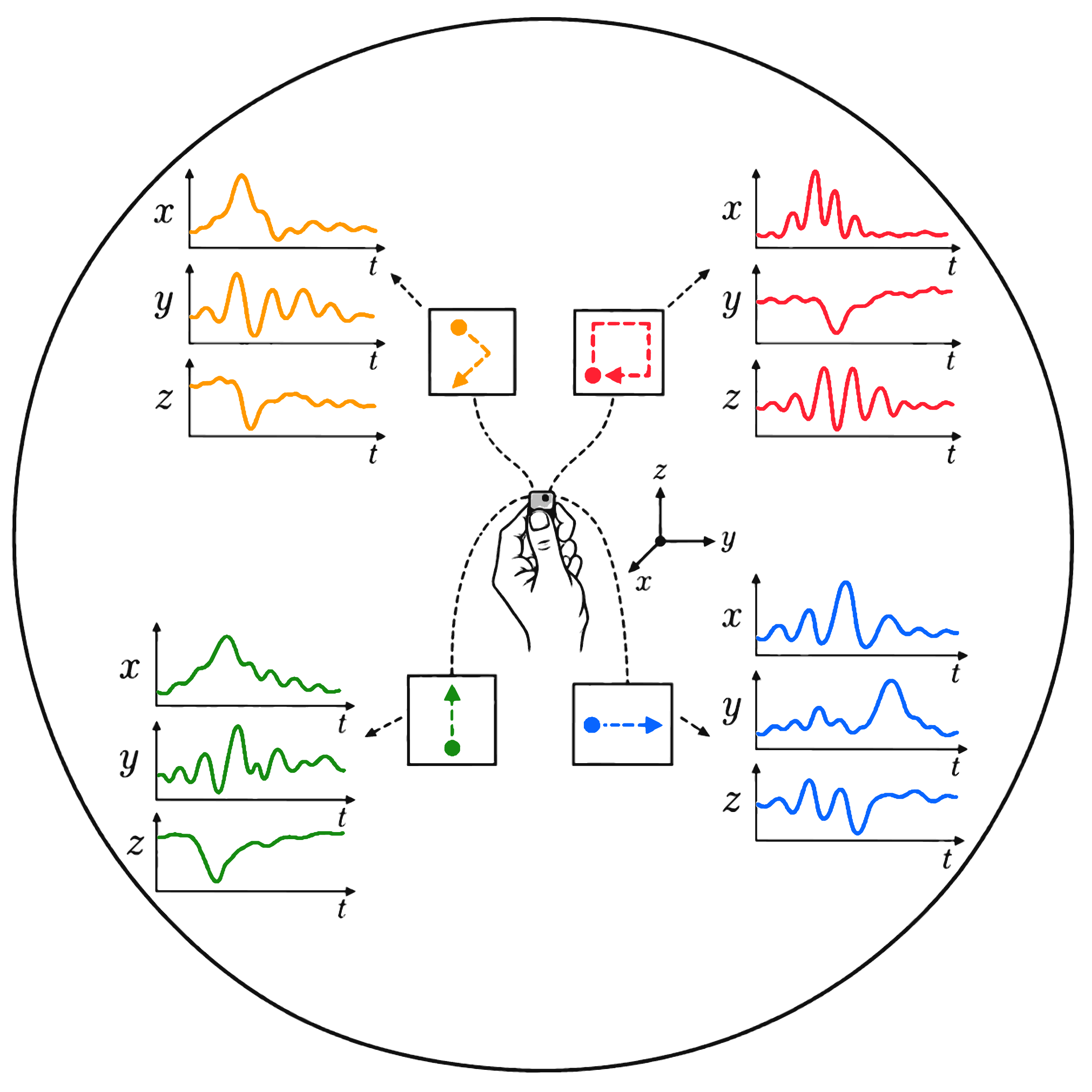}
\caption{Three synchronized accelerometer channels for four hand gestures (schematic).}
    \label{fig:uwave_multidimensional}
\end{figure} 
trajectories, or motion recordings
\citep{ramsay2005functional, Ferr2006}. In the multivariate setting, each observation is a vector-valued function
\(\mathbf{x}_i(t)=(x_{i1}(t),\ldots,x_{iD}(t))^\top\) with \(D\) synchronized
channels. For example, in \href{https://www.timeseriesclassification.com/description.php?Dataset=UWaveGestureLibrary}{UWaveGestureLibrary}, a hand-held accelerometer records
each gesture along the \(x\)-, \(y\)-, and \(z\)-axes, yielding three
synchronized channels (Figure~\ref{fig:uwave_multidimensional}).
Functional data analysis (FDA) exploits
smoothness and dependence structure for estimation, dimension reduction, and
learning \citep{Wang2015}. As sensing technologies produce increasingly rich
multichannel recordings, functional representation learning becomes important
for converting complex trajectories into compact features \citep{Wang2024functional}.

A first challenge is phase variation: corresponding events may occur at
different times, so curves with similar shapes can appear different
\citep{marron2015functional}. Ignoring this can make representations reflect
timing rather than shape \citep{kneip2008combining,tang2008pairwise,tucker2013generative}. Traditional registration estimates time
transformations before modeling \citep{srivastava2011registration}, but
alignment is then separate from the learning objective, and errors can
propagate. Joint approaches such as DeepFRC \citep{jiang2025deepfrc}, and NeuralFLoC \citep{xiong2026neuralfloc} learn
alignment with a modeling objective,
but their representations are tied to specific tasks and do not necessarily
provide separate compact amplitude and phase embeddings within a
reconstruction-based learner.

A second challenge is the intrinsic infinite-dimensionality of functional
data. Functional principal
component analysis (FPCA) addresses this by projecting onto a low-dimensional linear
subspace \citep{ramsay2005functional, Wang2015}, but linearity can be
restrictive for nonlinear variation, and channel-wise FPCA ignores
cross-channel covariance. Multivariate FPCA (MFPCA) provides joint components across channels but
remains linear \citep{happ2018multivariate}. Functional autoencoders (FAEs)
offer a nonlinear alternative with smooth functional reconstruction
\citep{HsiehSunWangHonavar2021, WuBeaulacCao2024}, and a joint multivariate encoder can aggregate
cross-channel information. Without an explicit warping mechanism to handle phase variation, however, FAEs must entangle temporal misalignment with amplitude variation.

To address these challenges, we propose the \textbf{Amplitude--Phase Functional
Autoencoders (AP-FAE)}, an unsupervised framework for temporally misaligned
functional data that applies to both univariate and multivariate settings, with
a primary focus on the multivariate case. A functional encoder jointly processes all
channels and separates the latent representation into amplitude and phase
embeddings. The amplitude decoder produces channel-specific trajectories in
canonical time, while the phase decoder generates a monotone,
endpoint-preserving warp shared across channels. AP-FAE is trained purely by
reconstruction, so the learned embeddings are not tied to any downstream task.
The amplitude and phase pathways are explicitly disentangled: the amplitude embedding
captures shape variation, and the phase embedding captures temporal deformation. This
design enables flexible downstream use, such as clustering on amplitude codes
alone or regression with both embeddings.

Our main contributions are summarized as follows:
\begin{itemize}
   \item To the best of our knowledge, we propose the first unsupervised
    framework that jointly learns nonlinear functional representations and
    explicitly disentangles amplitude and phase variation through separate
    embeddings and a shared monotone warp, with a primary focus on the
    multivariate setting.
    \item We establish the first theoretical bound quantifying how registration,
    reconstruction, and noise errors jointly control amplitude recovery in
    nonlinear functional representation learning, with numerical validation on
    synthetic data.
    \item We show that AP-FAE outperforms state-of-the-art baselines on most
    clustering and alignment metrics and on all reconstruction metrics across
    synthetic and six real-world benchmarks. Notably, on real data, clustering
    with amplitude embeddings alone consistently surpasses clustering with joint
    amplitude--phase embeddings, empirically confirming the benefit of explicit
    disentanglement for temporally misaligned functional data.
\end{itemize}

\section{Related Work}
\label{sec:related-work}

\paragraph{Functional alignment and amplitude--phase separation.}
Registration/Alignment estimates time transformations to separate amplitude from phase
\citep{marron2015functional}. Fisher--Rao and square-root velocity frameworks
provide geometric elastic alignment \citep{srivastava2011registration}, and
neural methods learn registration mappings \citep{Chen2021}. When used as
preprocessing, alignment is separate from representation learning. Joint methods
integrate alignment with clustering or classification.
For example, JPCCA \citep{gaffney2004jpcca} combines probabilistic alignment with model-based curve
clustering, SRC \citep{zeng2019src} jointly models registration and clustering, DeepFRC \citep{jiang2025deepfrc} connects
registration to supervised classification, and NeuralFLoC \citep{xiong2026neuralfloc} uses neural flows for
joint registration and clustering. These methods demonstrate the benefits of
integrating alignment with a modeling objective, but their representations are
tied to specific tasks. AP-FAE instead combines separate amplitude and phase embeddings with smooth
multivariate functional decoding under a reconstruction-only objective.
Clustering is applied to amplitude embeddings after fitting and does not
influence alignment.

\paragraph{Linear functional dimension reduction.}
FPCA provides optimal linear dimension reduction under squared-error loss
\citep{ramsay2005functional, Wang2015}, but may require many components for
nonlinear or misaligned curves, and channel-wise FPCA omits cross-channel
covariance. MFPCA extends to vector-valued observations with within- and
cross-component covariance \citep{happ2018multivariate}, giving a principled
multichannel basis. However, standard MFPCA is linear and lacks an explicit
subject-specific warp. Nonlinear extensions of FPCA exist but often rely on
kernel or manifold assumptions\citep{chen2012nonlinear,song2021nonlinear,tan2024nonlinear}, and they do not jointly model temporal
deformation. AP-FAE instead learns nonlinear amplitude and phase embeddings
under a reconstruction objective, providing a flexible alternative to linear
subspace methods.

 \paragraph{Autoencoders and functional autoencoders.}
Autoencoders learn low-dimensional representations through a reconstruction
bottleneck \citep{hinton2006reducing}. FAEs impose functional smoothness via basis representations of
filters and decoded trajectories \citep{HsiehSunWangHonavar2021,WuBeaulacCao2024}. Extending functional
projections across channels enables joint multivariate encoding \citep{chiou2014multivariate,happ2018multivariate}, but without
explicit temporal deformation, temporal shifts consume latent capacity and mix
with shape variation \citep{tucker2013generative,marron2015functional}. Some autoencoder variants incorporate temporal
regularization or recurrent structures \citep{srivastava2015unsupervised,malhotra2016lstm,
sagheer2019unsupervised}, but they do not separate amplitude from
phase. AP-FAE retains the smooth functional encoder--decoder while adding a
separate phase code and warping network, so temporal variation is modeled
explicitly rather than entangled with shape.

\section{The AP-FAE Framework}
\label{sec:model}

\paragraph{Overview.}
We consider $D$-channel functional data
$\mathbf{x}_i(t)=(x_{i1}(t),\ldots,x_{iD}(t))^\top$, where
$x_{id}\in L^2([0,1])$, $t\in[0,1]$, and $i=1,\ldots,N$. We assume the curves exhibit alignment/phase variation, which is common in
practice (as shown in the real-world data in Appendix~\ref{app:real-visualizations}),
and model them as
\begin{equation}
x_{id}(t)=f_{id}\bigl(\omega_i(t)\bigr)+\epsilon_{id}(t),
\qquad d=1,\ldots,D,
\label{eq:data-model}
\end{equation}
where the inverse warp $\omega_i$ maps observation time to a common canonical
time and is shared across channels, $f_{id}(t)$ is the amplitude function for
channel $d$, and $\epsilon_{id}(t)$ is measurement noise. Following classical
FDA formulations \citep{ramsay2005functional}, we assume $\omega_i$ is a
non-decreasing, continuous map on the compact domain $[0,1]$ with boundary
conditions $\omega_i(0)=0$ and $\omega_i(1)=1$; the domain is set to $[0,1]$
without loss of generality. Our goal is to learn nonlinear representations that disentangle amplitude variation in $f_{id}(t)$ from phase variation in $\omega_i(t)$,
yielding finite, compressed, and meaningful features for downstream modeling tasks such as regression, classification, and clustering.

As shown in Figure~\ref{fig:model_overview}, our framework jointly learns
a functional encoder $\mathcal{E}_\theta$ that maps $\mathbf{x}_i(t)$ to an amplitude embedding $\mathbf{a}_i\in\mathbb{R}^r$ and a phase embedding $\mathbf{p}_i\in\mathbb{R}^q$;
an amplitude decoder $\mathcal{D}_\phi$ that reconstructs amplitude functions $\{\widehat f_{id}\}_{d=1}^D$ from $\mathbf{a}_i$;
and a phase network $\mathcal{G}_\psi$ that reconstructs the inverse warp $\widehat\omega_i$ from $\mathbf{p}_i$.
Here $\widehat{\cdot}$ denotes estimated functions, and $\theta,\phi,\psi$ are the network parameters.
We minimize the reconstruction error
\begin{equation}
\theta,\phi,\psi
=
\operatorname*{arg\,min}_{\theta,\phi,\psi}
\frac{1}{ND}
\sum_{i=1}^N\sum_{d=1}^D
\int_0^1
\left\|x_{id}(t)-\widehat{x}_{id}(t)\right\|^2\,\mathrm{d}t,
\end{equation}
where $\widehat{x}_{id}(t)=\widehat{f}_{id}\bigl(\widehat{\omega}_i(t)\bigr)$.
The amplitude embedding $\mathbf{a}_i$ and phase embedding $\mathbf{p}_i$ separately encode amplitude and phase patterns.
This disentanglement is crucial for downstream modeling: for example, clustering with only $\{\mathbf{a}_i\}$ typically outperforms clustering with both $\{\mathbf{a}_i,\mathbf{p}_i\}$,
since the latter entangles phase information with amplitude patterns.
We call this framework AP-FAE, short for amplitude--phase functional autoencoders. We next describe its core modules step by step.

\begin{figure}[!htbp]
  \centering
  \includegraphics[width=\linewidth]{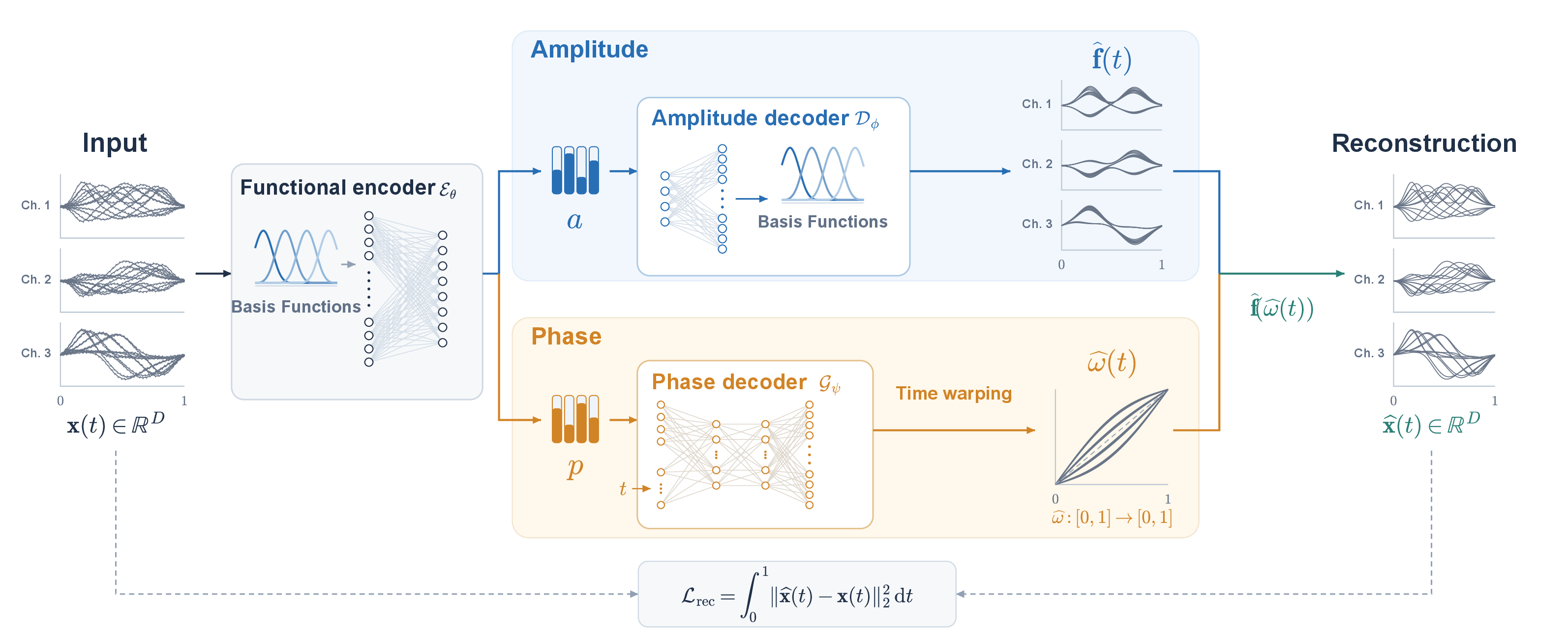}
\caption{Overview of AP-FAE. A functional encoder maps $\mathbf{x}(t)$ to amplitude and phase embeddings, decoded into channel-specific amplitude functions and a shared inverse warp; modules are trained jointly via reconstruction error to disentangle amplitude and phase variation. We use $D=3$ for illustration, with ``Ch.'' denoting ``Channel''.}
  \label{fig:model_overview}
\end{figure}

\paragraph{Functional encoder.}
Following classical functional autoencoders \citep{HsiehSunWangHonavar2021,WuBeaulacCao2024},
we encode $\mathbf{x}_i(t)$ into $K$ features $\mathbf{s}_i=(s_{i1},\ldots,s_{iK})^\top$
using functional weights and inner products:
\begin{equation}
s_{ik}
=
\sum_{d=1}^D \langle x_{id}(t),w_{kd}(t)\rangle
=
\sum_{d=1}^D \int_0^1 x_{id}(t)w_{kd}(t)\,\mathrm{d}t,
\qquad k=1,\ldots,K.
\label{eq:encoder-inner-product}
\end{equation}
Each weight function is expanded in $J$ fixed cubic B-spline bases $b_1(t),\ldots,b_J(t)$ on $[0,1]$:
\begin{equation}
w_{kd}(t)=\sum_{j=1}^J \alpha_{kdj}b_j(t),
\qquad d=1,\ldots,D,
\label{eq:encoder-weight-functions}
\end{equation}
where $\alpha_{kdj}$ are trainable coefficients.
For discretely observed functional data, the inner products are evaluated by numerical quadrature on a common grid $0=t_1<\cdots<t_M=1$ \citep{doi:10.1137/18M1229353}. 
We then apply an element-wise ReLU $\sigma(z)=\max\{0,z\}$ followed by an affine map:
\begin{equation}
\mathcal{E}_\theta(\mathbf{x}_i)
=
W_1\sigma(\mathbf{s}_i)+\mathbf{b}_1
=
\begin{pmatrix}
\mathbf{a}_i\\
\mathbf{p}_i
\end{pmatrix},
\label{eq:encoder-map}
\end{equation}
where $W_1\in\mathbb{R}^{(r+q)\times K}$ and $\mathbf{b}_1\in\mathbb{R}^{r+q}$ are trainable.
The first $r$ elements form $\mathbf{a}_i$, and the remaining $q$ elements form $\mathbf{p}_i$.
Both embeddings use information from all $D$ channels.
The encoder parameters are $\theta=\{\alpha_{kdj},W_1,\mathbf{b}_1\}$.

\paragraph{Amplitude decoder.}
The amplitude decoder maps $\mathbf{a}_i$ to amplitude functions on the canonical time scale:
$\mathcal{D}_\phi(\mathbf{a}_i)=(\widehat f_{i1}(t),\ldots,\widehat f_{iD}(t))^\top$.
Using the same fixed B-spline bases as the encoder,
\begin{equation}
\widehat f_{id}(t)=\sum_{j=1}^J c_{idj}b_j(t),
\qquad
c_{idj}=\sum_{k=1}^K \beta_{dkj}v_{ik},
\qquad d=1,\ldots,D,
\end{equation}
where
\begin{equation}
\mathbf{v}_i=(v_{i1},\ldots,v_{iK})^\top =\sigma(W_2\mathbf{a}_i+\mathbf{b}_2)\in\mathbb{R}^K.
\end{equation}
Here $W_2\in\mathbb{R}^{K\times r}$, $\mathbf{b}_2\in\mathbb{R}^K$, and $\beta_{dkj}$ are trainable.
Different channels have channel-specific coefficients but share the same amplitude embedding.
The phase embedding $\mathbf{p}_i$ is not used by this decoder.
The amplitude-decoder parameters are $\phi=\{\beta_{dkj},W_2,\mathbf{b}_2\}$.

\paragraph{Phase decoder and time warping.}
To reconstruct raw functional data on the observation time scale,
we need the inverse warp $\widehat\omega_i(t)$.
We use a multilayer perceptron $\mathcal{G}_\psi$ that takes a time coordinate and the phase code $(t,\mathbf{p}_i)$ as inputs, and define
\begin{equation}
\widehat\omega_i(t)
=
\frac{\int_0^t \exp\{\mathcal{G}_\psi(u,\mathbf{p}_i)\}\,\mathrm{d}u}
{\int_0^1 \exp\{\mathcal{G}_\psi(u,\mathbf{p}_i)\}\,\mathrm{d}u}.
\label{eq:normalized-integrated-exponential}
\end{equation}
The exponentiation ensures a positive warp density, so $\widehat\omega_i$ is strictly increasing and preserves the ordering of observed time points.
Normalization fixes the endpoints $\widehat\omega_i(0)=0$ and $\widehat\omega_i(1)=1$.
Larger values of $\widehat\omega_i(t)$ represent faster progress along the canonical time scale.
The network parameters $\psi$ are shared across subjects, while $\mathbf{p}_i$ allows subject-specific warps.
For a given subject, the same $\widehat\omega_i$ acts on every channel, preserving the common time coordinate across channels.
Algorithm~\ref{alg:warp} in Appendix~\ref{app:algorithms} summarizes the numerical computation of $\widehat\omega_i$; implementation details are provided in Appendix~\ref{app:implementation}.

\paragraph{Reconstruction and learning.}
We reconstruct the raw functional data by first mapping $t$ to canonical time $\widehat\omega_i(t)$ and then evaluating the decoded amplitude at $\widehat\omega_i(t)$:
\begin{equation}
\widehat x_{id}(t)
=
\widehat f_{id}\bigl(\widehat\omega_i(t)\bigr)
=
\sum_{j=1}^J c_{idj}b_j\bigl(\widehat\omega_i(t)\bigr),
\qquad d=1,\ldots,D.
\label{eq:twfae-decomposition}
\end{equation}
Equivalently, $\widehat x_{id}=\widehat f_{id}\circ\widehat\omega_i$, where $\circ$ denotes function composition.
Thus, $\mathbf{a}_i$ determines the curve shape through $\widehat f_{id}(t)$,
while $\mathbf{p}_i$ determines where that shape is evaluated in time through $\widehat\omega_i(t)$.
Using the common time grid $t_1,\ldots,t_M$, all parameters are learned jointly by minimizing
\begin{equation}
\mathcal{L}_{\mathrm{rec}}(\theta,\phi,\psi)
=
\frac{1}{NDM}
\sum_{i=1}^N\sum_{d=1}^D\sum_{m=1}^M
\left[x_{id}(t_m)-\widehat{x}_{id}(t_m)\right]^2.
\label{eq:twfae-loss}
\end{equation}
Algorithm~\ref{alg:twfae} in Appendix~\ref{app:algorithms} summarizes the joint updates using AdamW\citep{loshchilov2019decoupled}.
For downstream tasks, we use $\{\mathbf{a}_i\}$ for amplitude-based clustering,
and defer the use of both $\{\mathbf{a}_i,\mathbf{p}_i\}$ to settings where
phase variation is predictive; the estimated functions
$\{\widehat{f}_{id},\widehat{\omega}_i\}$ can further visualize the learned
amplitude--phase disentanglement.

\paragraph{Computational and space complexity.}
Training AP-FAE scales linearly in the number of subjects $N$, channels $D$, grid points $M$, and B-spline basis size $J$.
The shared phase network avoids an additional factor of $D$ in the warping pathway.
Working memory during minibatch training is dominated by model and optimizer storage together with the activation cache,
while full-dataset storage requires $\mathcal{O}(NDM)$.
A complete derivation, including per-epoch complexity and memory breakdown, is given in Appendix~\ref{app:complexity}.

\section{Theory}
\label{sec:insight}

We bound the error in recovering the true amplitude curves by registration,
reconstruction, and measurement-noise errors. Let
\(\mathbf{x}_i=\mathbf{f}_i\circ\omega_i+\boldsymbol e_i\) denote the observed
curves and \(\widehat{\mathbf{x}}_i=\widehat{\mathbf{f}}_i\circ\widehat{\omega}_i\)
the fitted reconstructions, where 
\(\boldsymbol e_i\in L^2([0,1];\mathbb{R}^D)\) is measurement noise. For a
\(D\)-variate function \(\mathbf{u}\), define
\(\|\mathbf{u}\|_{L^2}^2=\sum_{d=1}^D\int_0^1 u_d(t)^2\,\mathrm{d}t\).

\begin{theorem}[Amplitude error bound]
\label{thm:amplitude-recovery-noise}
Assume that
(i) \(\|\mathbf{f}_i(s)-\mathbf{f}_i(t)\|_2\le L|s-t|\) for all \(i\) and
\(s,t\in[0,1]\);
(ii) the true and fitted inverse warps \(\omega_i,\widehat{\omega}_i\) are
strictly increasing continuous bijections of \([0,1]\) onto itself;
and (iii) each fitted inverse warp \(\widehat{\omega}_i\) is \(C\)-Lipschitz.
Define
\[
\epsilon_g=\max_i\|\widehat{\omega}_i-\omega_i\|_\infty,
\qquad
\epsilon_x^2=\frac1N\sum_{i=1}^N
\|\widehat{\mathbf{x}}_i-\mathbf{x}_i\|_{L^2}^2,
\qquad
\tau^2=\frac1N\sum_{i=1}^N\|\boldsymbol e_i\|_{L^2}^2.
\]
Then
\begin{equation}
\left\{\frac1N\sum_{i=1}^N
\|\widehat{\mathbf{f}}_i-\mathbf{f}_i\|_{L^2}^2\right\}^{1/2}
\le L\epsilon_g+\sqrt{C}\,(\epsilon_x+\tau).
\label{eq:amplitude-recovery-bound-noise}
\end{equation}
Here \(\epsilon_x\) is measured against the noisy observations. This
deterministic bound requires no independence or zero-mean assumption on the
noise.
\end{theorem}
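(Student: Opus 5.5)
We need to bound \(\|\widehat{\mathbf f}_i-\mathbf f_i\|_{L^2}\) per subject and then average. The idea is to move everything onto observation time through the fitted warp. Since \(\widehat\omega_i\) is a strictly increasing bijection that is \(C\)-Lipschitz, the substitution \(s=\widehat\omega_i(t)\) gives \(\mathrm ds=\widehat\omega_i'(t)\,\mathrm dt\le C\,\mathrm dt\) almost everywhere. (A Lipschitz map is absolutely continuous, so the change of variables is valid.) Hence for any \(D\)-variate \(\mathbf u\in L^2\),
\[
\|\mathbf u\|_{L^2}^2=\int_0^1\|\mathbf u(\widehat\omega_i(t))\|_2^2\,\widehat\omega_i'(t)\,\mathrm dt\le C\,\|\mathbf u\circ\widehat\omega_i\|_{L^2}^2 .
\]
Applying this with \(\mathbf u=\widehat{\mathbf f}_i-\mathbf f_i\) yields
\[
\|\widehat{\mathbf f}_i-\mathbf f_i\|_{L^2}\le\sqrt C\,\|\widehat{\mathbf f}_i\circ\widehat\omega_i-\mathbf f_i\circ\widehat\omega_i\|_{L^2}.
\]

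Next we split the composed difference with the triangle inequality:
\[
\widehat{\mathbf f}_i\circ\widehat\omega_i-\mathbf f_i\circ\widehat\omega_i=(\widehat{\mathbf x}_i-\mathbf x_i)+\boldsymbol e_i+(\mathbf f_i\circ\omega_i-\mathbf f_i\circ\widehat\omega_i).
\]
This uses \(\mathbf x_i=\mathbf f_i\circ\omega_i+\boldsymbol e_i\). By assumption (i), the last term is pointwise bounded by \(L|\omega_i(t)-\widehat\omega_i(t)|\le L\epsilon_g\), so its \(L^2\) norm is at most \(L\epsilon_g\). This gives the per-subject bound
\[
\|\widehat{\mathbf f}_i-\mathbf f_i\|_{L^2}\le\sqrt C\bigl(\|\widehat{\mathbf x}_i-\mathbf x_i\|_{L^2}+\|\boldsymbol e_i\|_{L^2}+L\epsilon_g\bigr).
\]

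This direct route puts a factor \(\sqrt C\) on the registration term, whereas the target bound \eqref{eq:amplitude-recovery-bound-noise} has \(L\epsilon_g\) without \(\sqrt C\). To get the stated constant, I would split in canonical time instead of observation time:
\[
\widehat{\mathbf f}_i-\mathbf f_i=\bigl(\widehat{\mathbf f}_i-\mathbf f_i\circ\omega_i\circ\widehat\omega_i^{-1}\bigr)+\bigl(\mathbf f_i\circ\omega_i\circ\widehat\omega_i^{-1}-\mathbf f_i\bigr).
\]
For the second piece, write \(s=\widehat\omega_i(t)\), which ranges over all of \([0,1]\) because \(\widehat\omega_i\) is a bijection (assumption (ii)). Then by (i),
\[
\|\mathbf f_i(\omega_i(t))-\mathbf f_i(\widehat\omega_i(t))\|_2\le L\,|\omega_i(t)-\widehat\omega_i(t)|\le L\epsilon_g,
\]
so its \(L^2\) norm is at most \(L\epsilon_g\) with no Jacobian involved. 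The first piece is \(\mathbf u\circ\widehat\omega_i^{-1}\) with \(\mathbf u=\widehat{\mathbf f}_i\circ\widehat\omega_i-\mathbf f_i\circ\omega_i=(\widehat{\mathbf x}_i-\mathbf x_i)+\boldsymbol e_i\). The change-of-variables inequality above then gives the bound \(\sqrt C\bigl(\|\widehat{\mathbf x}_i-\mathbf x_i\|_{L^2}+\|\boldsymbol e_i\|_{L^2}\bigr)\). So per subject,
\[
\|\widehat{\mathbf f}_i-\mathbf f_i\|_{L^2}\le L\epsilon_g+\sqrt C\bigl(\|\widehat{\mathbf x}_i-\mathbf x_i\|_{L^2}+\|\boldsymbol e_i\|_{L^2}\bigr).
\]

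Finally we average. Take the root-mean-square over \(i\) and apply Minkowski's inequality in \(\ell^2\) over \(i\) to the three summands. The constant \(L\epsilon_g\) contributes exactly \(L\epsilon_g\), and the other two contribute \(\sqrt C\,\epsilon_x\) and \(\sqrt C\,\tau\). This yields \eqref{eq:amplitude-recovery-bound-noise}.

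The main obstacle is the change-of-variables step. It needs \(\widehat\omega_i\) absolutely continuous, which holds since it is Lipschitz, and it needs \(\widehat\omega_i'\le C\) almost everywhere. The change of variables for \(\int\|\mathbf u\circ\widehat\omega_i^{-1}\|^2\) is justified for a monotone absolutely continuous bijection. Note that only the upper Lipschitz bound on \(\widehat\omega_i\) is needed, and no lower bound on its derivative.
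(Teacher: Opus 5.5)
Your final argument is correct and is essentially the paper's proof. It uses the same canonical-time decomposition $\widehat{\mathbf f}_i-\mathbf f_i=\{(\widehat{\mathbf x}_i-\mathbf x_i)+\boldsymbol e_i\}\circ\widehat\omega_i^{-1}+(\mathbf f_i\circ\omega_i\circ\widehat\omega_i^{-1}-\mathbf f_i)$, the pointwise bound $L\epsilon_g$ on the warp-mismatch term, the change-of-variables estimate $\|\mathbf u\circ\widehat\omega_i^{-1}\|_{L^2}^2\le C\|\mathbf u\|_{L^2}^2$ from $0\le\widehat\omega_i'\le C$ a.e., and Minkowski over subjects. Your preliminary observation-time split is also valid but strictly weaker, since $C\ge 1$ gives $\sqrt C\,L\epsilon_g\ge L\epsilon_g$, so you were right to discard it.
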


\begin{Remark}[Practical implication and feasibility]
\label{rem:amplitude-recovery}
\Cref{thm:amplitude-recovery-noise} shows that amplitude recovery is controlled
by three separately measurable quantities: registration error \(\epsilon_g\),
reconstruction error \(\epsilon_x\), and noise level \(\tau\). Thus, small
reconstruction error alone does not guarantee accurate amplitude recovery;
reliable amplitude embeddings require accurate phase alignment. In practice,
\(\epsilon_x\) is directly minimized during training, \(\tau\) can be estimated
from repeated measurements or a noise model, and \(\epsilon_g\) can be monitored
on synthetic or held-out aligned data. The assumptions are mild for functional
registration: the fitted warps are strictly increasing by construction through
positive warp densities and endpoint normalization, and bounded-slope
piecewise-linear or smooth warps satisfy the \(C\)-Lipschitz condition directly.
In our simulations, the resulting bound is satisfied across all scenarios. As in
functional registration, amplitude and phase are identifiable only relative to a
common aligned coordinate; the theorem therefore conditions on registration
accuracy in that coordinate rather than deriving it from reconstruction loss
alone.
\end{Remark}

The proof is given in Appendix~\ref{app:amplitude-recovery}.

\section{Experiments}
\label{sec:experiments}

\paragraph{Setup.}
AP-FAE uses \(J=20\) cubic B-spline bases, functional hidden width \(K=16\),
amplitude and phase dimensions \(r=q=4\), and a time-conditioned warp MLP with
two hidden layers of width \(32\). Clustering applies \(K\)-means with \(20\)
initializations to the amplitude embeddings \(\{\mathbf{a}_i\}\). We evaluate
on synthetic data and six real datasets (channels \(1\)--\(8\), classes
\(3\)--\(8\)), reporting clustering accuracy (ACC), adjusted total variance
(ATV) for registration, and mean squared error (MSE) for reconstruction; higher
ACC and lower ATV/MSE are better. We compare against four baseline categories:
(i) \emph{classical functional autoencoders}, including FAE
\citep{HsiehSunWangHonavar2021}, which learns a functional representation without an
explicit warp, and FAEclust \citep{singh2025faeclust}, which combines functional
autoencoding with shape-based clustering;
(ii) \emph{neural functional registration}, represented by SrvfRegNet
\citep{Chen2021};
(iii) \emph{joint registration and clustering}, including JPCCA
\citep{gaffney2004jpcca} and SRC \citep{zeng2019src}, which estimate temporal
alignment together with cluster structure; and
(iv) \emph{traditional functional clustering}, including Funclust
\citep{JACQUES2013164} and funHDDC \citep{BouveyronJacques2011funHDDC}, which
model class structure through functional principal components and
group-specific functional subspaces. For fair comparison, all clustering
baselines use the true number of classes, and comparisons are partial: only
AP-FAE performs clustering, alignment, and reconstruction jointly. Full
protocols, metrics, and baseline implementations are given in
Appendices~\ref{app:evaluation-metrics} and 
\ref{app:simulation-protocol}.

\subsection{Simulation}
\label{sec:simulation}
\paragraph{Data generation.}
We generate three classes of trivariate curves with \(100\) subjects per class,
\(100\) grid points, class-specific three-harmonic amplitudes, and shared
subject-specific exponential warps independent of class. We vary the
cross-channel coefficient correlation \(\rho\in\{0.2,0.6\}\) and the
phase-distortion range \(B\in\{0.75,1.25\}\);
larger \(\rho\) induces stronger correlation among channels, while larger \(B\)
produces more severe phase distortion. The complete design is in
Appendix~\ref{app:simulation-design}.

\paragraph{Comparison with baselines.}
AP-FAE achieves the best ACC, ATV, and MSE in all four settings of
Table~\ref{tab:sim-comparison}, outperforming the strongest competitor SRC
(e.g., at \(\rho=0.2, B=0.75\): ACC \(0.924\) vs.\ \(0.875\), ATV \(1.187\)
vs.\ \(1.347\)). FAE, which lacks an explicit warp, degrades in clustering as
\(B\) increases (ACC \(0.690\!\to\!0.586\) at \(\rho=0.2\);
\(0.688\!\to\!0.572\) at \(\rho=0.6\)), and the alignment-only SrvfRegNet also
worsens (ATV \(1.502\!\to\!1.965\) at \(\rho=0.2\)), whereas AP-FAE remains
stable; these trends confirm that amplitude-based grouping benefits from joint
temporal modeling. Increasing \(\rho\) mainly lowers ACC while leaving ATV and
MSE nearly unchanged, indicating that the metrics capture distinct aspects of
the fit. Comparisons are partial: SrvfRegNet reports alignment only;
Funclust/funHDDC report clustering only; FAE and FAEclust report clustering and
reconstruction but not alignment; JPCCA and SRC report clustering and alignment
but not reconstruction. Thus MSE is compared against two autoencoder
references, FAE and FAEclust.

\begin{table}[!htbp]
\centering
\small
\setlength{\tabcolsep}{3.2pt}
\renewcommand{\arraystretch}{1.05}
\resizebox{\linewidth}{!}{%
\begin{tabular}{llcccccccc}
\toprule
Settings & Metric & Funclust & funHDDC & JPCCA & SRC & SrvfRegNet & FAE & FAEclust & AP-FAE \\
\midrule
$\rho=0.2, B=0.75$ & ACC $\uparrow$   & 0.665 & 0.679 & 0.585 & \underline{0.875} & -- & 0.690 & 0.680 & \textbf{0.924} \\
        & ATV $\downarrow$ & -- & -- & 2.540 & \underline{1.347} & 1.502 & -- & -- & \textbf{1.187} \\
        & MSE $\downarrow$ & -- & -- & -- & -- & -- & \underline{0.015} & 0.357 & \textbf{0.014} \\
\addlinespace
$\rho=0.6, B=0.75$ & ACC $\uparrow$   & 0.591 & 0.702 & 0.608 & \underline{0.909} & -- & 0.688 & 0.676 & \textbf{0.921} \\
        & ATV $\downarrow$ & -- & -- & 2.435 & \underline{1.270} & 1.547 & -- & -- & \textbf{1.176} \\
        & MSE $\downarrow$ & -- & -- & -- & -- & -- & \underline{0.014} & 0.356 & \textbf{0.013} \\
\midrule
$\rho=0.2, B=1.25$ & ACC $\uparrow$   & 0.553 & 0.579 & 0.566 & \underline{0.880} & -- & 0.586 & 0.663 & \textbf{0.910} \\
        & ATV $\downarrow$ & -- & -- & 3.544 & \underline{1.662} & 1.965 & -- & -- & \textbf{1.177} \\
        & MSE $\downarrow$ & -- & -- & -- & -- & -- & \underline{0.016} & 0.359 & \textbf{0.015} \\
\addlinespace
$\rho=0.6, B=1.25$ & ACC $\uparrow$   & 0.652 & 0.565 & 0.570 & \underline{0.853} & -- & 0.572 & 0.661 & \textbf{0.890} \\
        & ATV $\downarrow$ & -- & -- & 3.628 & \underline{1.611} & 1.719 & -- & -- & \textbf{1.166} \\
        & MSE $\downarrow$ & -- & -- & -- & -- & -- & \underline{0.015} & 0.360 & \textbf{0.014} \\
\bottomrule
\end{tabular}%
}
\caption{Clustering (ACC), alignment (ATV), and reconstruction (MSE) on trivariate synthetic data over \(B\) and \(\rho\); means over 50 repetitions. Bold/underline mark the best/second-best distinct values per row, and ``--'' denotes not applicable.}
\label{tab:sim-comparison}
\end{table}

\paragraph{Recovery of amplitude and phase.}
The simulation provides ground-truth amplitude functions $\{\mathbf{f}_i\}$ and
inverse warps $\{\omega_i(t)\}$, which are withheld from modeling. We assess
recovery by comparing the AP-FAE estimates $\{\widehat{\mathbf{f}}_i\}$ and
$\{\widehat{\omega}_i(t)\}$ against these truths. Figure~\ref{fig:sim-d3-interpretability}(a)
compares the first two multivariate functional principal components (FPCs) of
$\{\mathbf{f}_i\}$ (black solid) and $\{\widehat{\mathbf{f}}_i\}$ (blue dashed);
Figure~\ref{fig:sim-d3-interpretability}(b) compares a subset of true
$\{\omega_i(t)\}$ (dashed) and estimated $\{\widehat{\omega}_i(t)\}$ (solid).
The close agreement demonstrates effective amplitude and phase recovery. Full
reconstruction is shown in Figure~\ref{fig:sim-d3-workflow} in
Appendix~\ref{app:simu-visu}.

\begin{figure}[!htbp]
\centering
\begin{subfigure}[t]{.64\linewidth}
  \centering
  \vspace{0pt}
  \includegraphics[width=\linewidth]{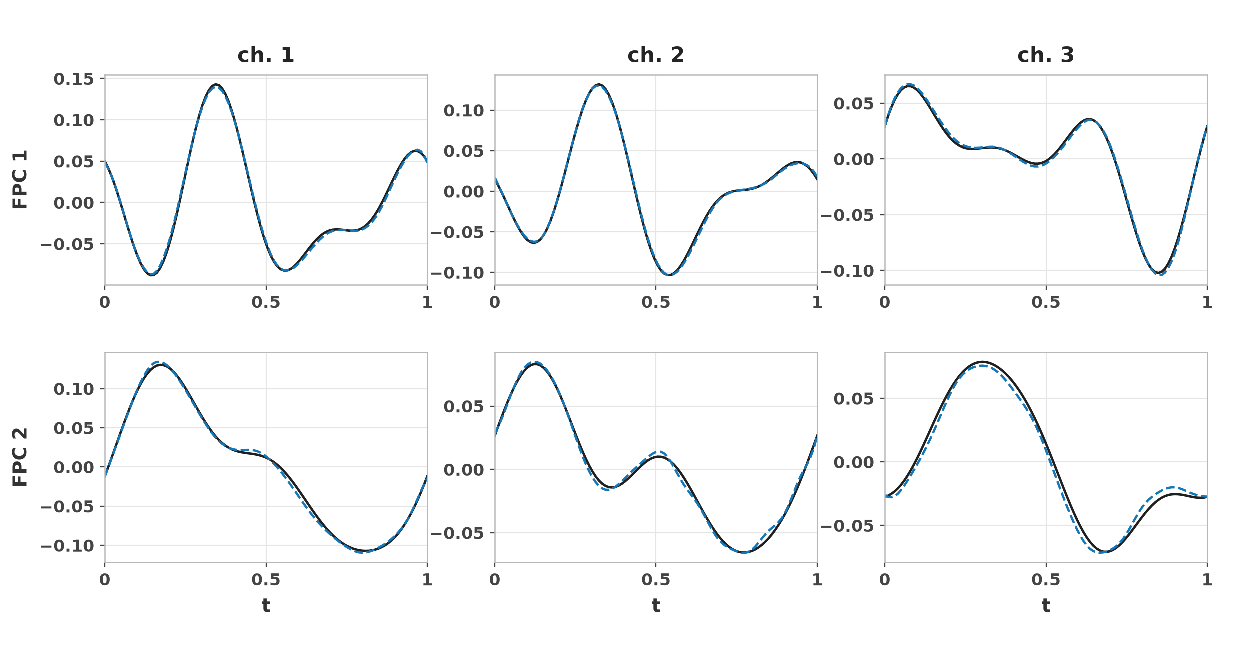}
  \caption{}
\end{subfigure}\hfill
\begin{subfigure}[t]{.32\linewidth}
  \centering
  \vspace{0pt}
  \includegraphics[width=\linewidth]{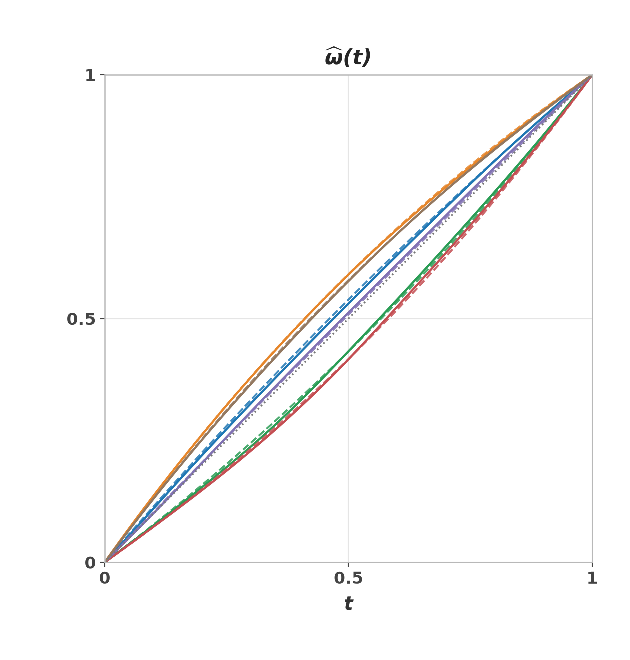}
  \caption{}
\end{subfigure}
\caption{Amplitude and phase recovery (\(\rho=0.2\), \(B=0.75\)). (a) True (black solid) vs.\ estimated (blue dashed) multivariate functional principal components (FPCs; first two shown) for channels 1--3. (b) True (dashed) vs.\ estimated (solid) inverse warps for six randomly chosen subjects (distinct colors); the black dotted line is the identity.}
\label{fig:sim-d3-interpretability}
\end{figure}

\paragraph{Numerical validation of amplitude error bound.}
We numerically validate \Cref{thm:amplitude-recovery-noise} on 50 AP-FAE fits per simulation setting. All \(200\) fits satisfy the bound, with
amplitude errors \(0.214\)--\(0.253\) and bounds \(0.434\)--\(0.523\),
confirming the theory while showing the bound is conservative. Full details are
in Appendix~\ref{app:bound-validation}.

\subsection{Real-World Data}
\label{sec:real-world}

\paragraph{Datasets.}
We evaluate on six benchmark datasets: three multivariate from the \href{https://www.timeseriesclassification.com/index.php}{UEA archive}
and three univariate from the \href{https://www.timeseriesclassification.com/index.php}{UCR archive}
\citep{Bagnall2018UEA,UCRArchive2018}. \textbf{BasicMotions} ($D=6$, 4 classes) contains
80 six-channel smartwatch recordings of walking, resting, running, and
badminton, each with 100 time points. \textbf{ERing} ($D=4$, 6 classes) comprises 300
four-channel wearable-ring samples of six finger postures, each with 65 time
points. \textbf{UWaveGestureLibrary} ($D=3$, 8 classes) includes 4,479 three-axis
accelerometer recordings of eight hand gestures, each with 315 time points. \textbf{CBF}
($D=1$, 3 classes) consists of 930 univariate plateau, rising-ramp, and
falling-ramp signals with random onset and duration, each of length 128. \textbf{Plane}
($D=1$, 7 classes) contains 210 univariate contour sequences from top-view
aircraft images, each of length 144. \textbf{Symbols} ($D=1$, 6 classes) comprises 1,020
univariate recordings of horizontal motion as participants draw displayed
symbols, each with 398 observations.

\paragraph{Performance comparison.}
Table~\ref{tab:real-comparison} shows that AP-FAE achieves the best ACC on five
of six datasets, the best ATV on five of six, and the best MSE on all six; the
exceptions are \textbf{UWaveGestureLibrary} (SRC best ACC, \(0.888\) vs.\ \(0.811\)) and
\textbf{ERing} (SRC best ATV, \(4.773\) vs.\ \(5.554\)). The functional autoencoders FAE and FAEclust, which lack a
phase pathway, are the runners-up in MSE on every dataset but are never best in
clustering, supporting the benefit of explicit amplitude--phase separation.
Alignment and reconstruction quality do not fully determine clustering: SRC
attains the best ATV on \textbf{ERing} yet is second in ACC, and AP-FAE attains the best
ATV on \textbf{UWaveGestureLibrary} yet is second in ACC, indicating that the three
criteria capture complementary aspects of the fit.
Figure~\ref{fig:realdata-uwave-workflow} visualizes amplitude--phase
disentanglement and reconstruction for one AP-FAE fit on channel 1 of
\textbf{UWaveGestureLibrary}; additional visualizations are in
Appendix~\ref{app:real-visualizations}.

\begin{table}[!htbp]
\centering
\small
\setlength{\tabcolsep}{3.2pt}
\renewcommand{\arraystretch}{1.05}
\resizebox{\linewidth}{!}{%
\begin{tabular}{lccccccccc}
\toprule
Datasets&Metric & Funclust & funHDDC & JPCCA & SRC & SrvfRegNet & FAE & FAEclust & AP-FAE \\
\midrule
\textbf{BasicMotions} &ACC $\uparrow$   & 0.408 & 0.439 & 0.250 & 0.378 & -- & \underline{0.506} & 0.513 & \textbf{0.807} \\
($D=6$, 4 classes)&ATV $\downarrow$ & -- & -- & \underline{107.025} & 113.544 & 114.108 & -- & -- & \textbf{61.116} \\
&MSE $\downarrow$ & -- & -- & -- & -- & -- & \underline{17.955} & 19.252 & \textbf{15.452} \\
\midrule
\textbf{ERing}& ACC $\uparrow$   & 0.219 & 0.745 & 0.493 & \underline{0.903} & -- & 0.862 & 0.488 & \textbf{0.943} \\
($D=4$, 6 classes)&ATV $\downarrow$ & -- & -- & 10.317 & \textbf{4.773} & 12.826 & -- & -- & \underline{5.554} \\
&MSE $\downarrow$ & -- & -- & -- & -- & -- & \underline{0.257} & 0.620 & \textbf{0.231} \\
\midrule
\textbf{UWaveGestureLibrary}&ACC $\uparrow$   & 0.211 & 0.711 & 0.184 & \textbf{0.888} & -- & 0.707 & 0.538 & \underline{0.811} \\
($D=3$, 8 classes)&ATV $\downarrow$ & -- & -- & 20.478 & \underline{14.771} & 37.213 & -- & -- & \textbf{13.209} \\
&MSE $\downarrow$ & -- & -- & -- & -- & -- & \underline{0.327} & 0.562 & \textbf{0.275} \\
\midrule
\textbf{CBF} &ACC $\uparrow$   & 0.345 & 0.542 & \underline{0.870} & 0.817 & -- & 0.596 & 0.335 & \textbf{0.911} \\
($D=1$, 3 classes)&ATV $\downarrow$ & -- & -- & 145.374 & \underline{6.153} & 16.712 & -- & -- & \textbf{4.515} \\
&MSE $\downarrow$ & -- & -- & -- & -- & -- & \underline{0.254} & 0.402 & \textbf{0.213} \\
\midrule
\textbf{Plane} &ACC $\uparrow$   & 0.192 & 0.810 & 0.267 & \underline{0.868} & -- & 0.804 & 0.734 & \textbf{0.875} \\
($D=1$, 7 classes)&ATV $\downarrow$ & -- & -- & \underline{0.292} & 0.297 & 0.920 & -- & -- & \textbf{0.057} \\
&MSE $\downarrow$ & -- & -- & -- & -- & -- & \underline{0.075} & 0.418 & \textbf{0.039} \\
\midrule
\textbf{Symbols}&ACC $\uparrow$   & 0.193 & 0.564 & 0.178 & \underline{0.738} & -- & 0.677 & 0.626 & \textbf{0.765} \\
($D=1$, 6 classes)&ATV $\downarrow$ & -- & -- & \underline{3.370} & 3.915 & 4.091 & -- & -- & \textbf{1.781} \\
&MSE $\downarrow$ & -- & -- & -- & -- & -- & \underline{0.059} & 0.566 & \textbf{0.038} \\
\bottomrule
\end{tabular}%
}
\caption{Clustering (ACC), alignment (ATV), and reconstruction (MSE) on real datasets; \(D\) denotes the number of channels. Means over 50 repetitions; bold/underline mark the best and second-best values per row, and ``--'' denotes not applicable.}
\label{tab:real-comparison}
\end{table}

\begin{figure}[!htbp]
\centering
\includegraphics[width=\linewidth]{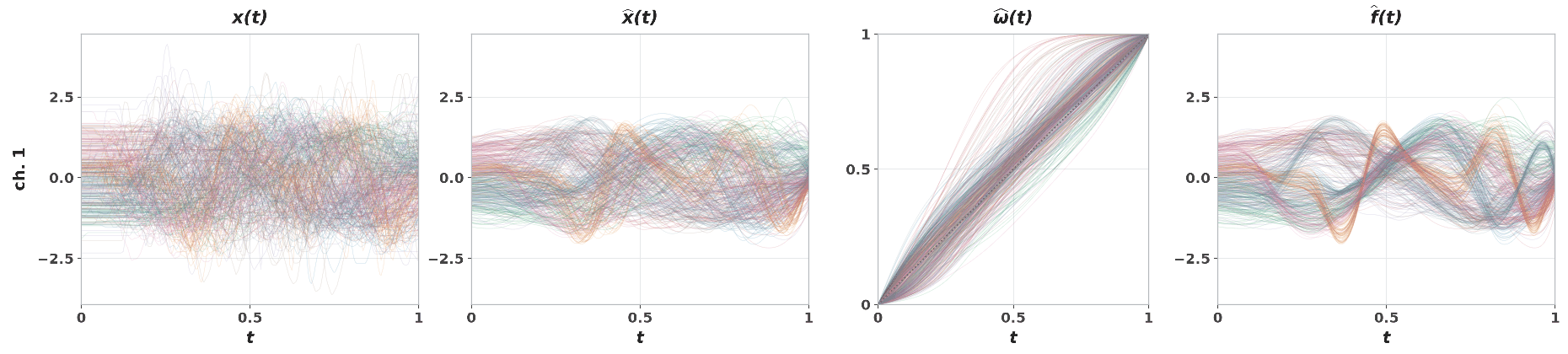}
\caption{First channel of an AP-FAE fit on \textbf{UWaveGestureLibrary} (\(D=3\)). Columns show raw \(\mathbf{x}(t)\), reconstructed \(\widehat{\mathbf{x}}(t)\), estimated inverse warps \(\widehat{\omega}(t)\), and estimated amplitudes \(\widehat{\mathbf{f}}(t)\); colors denote classes. All three channels appear in Figure~\ref{fig:uwave} in Appendix~\ref{app:real-visualizations}.}
\label{fig:realdata-uwave-workflow}
\end{figure}

\paragraph{Ablation on embedding usage and size.}
Clustering is performed on the amplitude embedding \(\mathbf{a}\), since the
goal is to group curves by shape under temporal misalignment. We also test
clustering on the phase embedding \(\mathbf{p}\) alone and on both embeddings
\([\mathbf{a},\mathbf{p}]\). As shown in
Table~\ref{tab:amplitude-phase-clustering}, clustering on \(\mathbf{a}\) alone
performs best, while adding \(\mathbf{p}\) degrades performance because phase
variation confuses the amplitude pattern; clustering on \(\mathbf{p}\) alone can
still be informative, and phase variation may carry useful signal for other tasks
such as regression. These results highlight the importance of disentangling
amplitude and phase. We further study the sensitivity to the embedding sizes
\(r\) and \(q\). Figure~\ref{fig:latent-allocation-acc} reports clustering
performance for \((r,q)=(2,6),(3,5),(4,4),(5,3),(6,2)\) across all datasets,
where \(r=q=4\) is consistently competitive and is used throughout.
 
\begin{figure}[!htbp]
\centering
\begin{minipage}[b]{0.48\linewidth}
  \centering
  \small
  \setlength{\tabcolsep}{4pt}
  \renewcommand{\arraystretch}{1.05}
  \resizebox{0.95\linewidth}{!}{%
  \begin{tabular}{lccc}
  \toprule
  Dataset & $\mathbf{a}$ & $\mathbf{p}$ & $[\mathbf{a},\mathbf{p}]$ \\
  \midrule
  BasicMotions ($D=6$) & \textbf{0.807} & \underline{0.800} & 0.695 \\
  ERing ($D=4$) & \textbf{0.943} & 0.721 & \underline{0.840} \\
  UWaveGestureLibrary ($D=3$) & \textbf{0.811} & 0.536 & \underline{0.701} \\
  \midrule
  CBF ($D=1$) & \textbf{0.911} & 0.585 & \underline{0.821} \\
  Plane ($D=1$) & \textbf{0.875} & 0.739 & \underline{0.854} \\
  Symbols ($D=1$) & \textbf{0.765} & 0.683 & \underline{0.753} \\
  \bottomrule
  \end{tabular}}
\captionof{table}{Clustering accuracy (ACC) using amplitude \(\mathbf{a}\), phase \(\mathbf{p}\), or both embeddings.}
  \label{tab:amplitude-phase-clustering}
\end{minipage}
\begin{minipage}[b]{0.48\linewidth}
  \centering
  \includegraphics[width=0.95\linewidth]{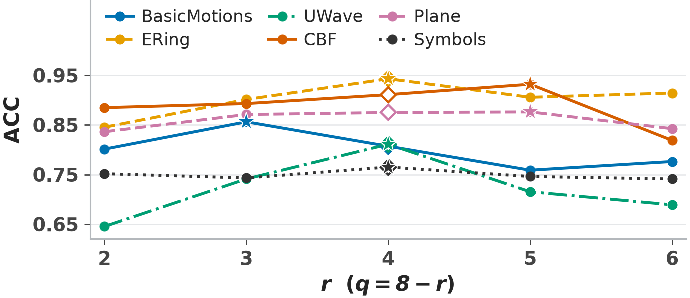}
\captionof{figure}{Sensitivity to embedding sizes; stars mark the best.}
  \label{fig:latent-allocation-acc}
\end{minipage}\hfill
\end{figure}

\section{Discussion}
\label{sec:discussion}

\paragraph{Robustness to irregular sampling and missing observations.}
We tested random missingness and uneven sampling in the trivariate synthetic
setting ($B=0.75$, $\rho=0.2$) over 50 repetitions. Clustering ACC remained
close to the complete-data baseline across all conditions, but clustering
stability did not imply accurate reconstruction in sparsely observed regions.
Under 80\% random missingness, remaining observations and the MSE
and ATV were $0.0072$ and $1.212$, respectively. In contrast, strongly uneven
sampling retained all observations but increased the MSE and ATV
to $0.0302$ and $1.275$. Thus, for alignment and reconstruction, 
what mattered was not merely how many observations were available, but whether 
they adequately covered the entire time interval. Details
and the full table are in Appendix~\ref{app:robustness_experiments}.

\paragraph{Extreme warping or no warping.}
We tested continuous warps with short plateaus and nondecreasing warps with
upward jumps. Mean ACC was nearly unchanged relative to the strictly increasing
baseline, while longer plateaus and larger jumps increased ATV and MSE. Since
AP-FAE assumes continuous, strictly positive-velocity warps, it cannot exactly
represent plateaus or jumps and instead approximates them with small positive
velocities and steep transitions. Amplitude adjustments can still preserve
clustering and low average MSE, so these metrics do not imply recovery of the
true warp. Details are in Appendix~\ref{app:robustness_experiments}. For the
no-warping case (\(\omega(t)=t\), \(B=0\)), AP-FAE performs comparably to FAE (Table \ref{tab:simulation-b0} in Appendix \ref{app:simulation-b0}) and
estimates \(\widehat{\omega}(t)\approx t\) (Figure \ref{fig:simulation-b0-rho060} in Appendix~\ref{app:simu-visu}), confirming applicability whether or
not misalignment exists. 

\paragraph{Scalability.}
AP-FAE behaves stably at both ends of the sample-size and resolution ranges we tested. Increasing the number of subjects from $90$ to $30{,}000$ lowers reconstruction error monotonically (MSE $0.016\!\to\!0.011$) while clustering accuracy and alignment remain essentially flat; training time grows approximately linearly in $N$. Varying the number of grid points from $30$ to $1{,}000$ leaves MSE and ACC nearly unchanged, and training time changes by less than $7\%$ over a $33$-fold increase in $M$. The apparent growth in raw ATV with $M$ is an artifact of its approximately $\sqrt{M}$ scaling: after normalization, ATV lies within $0.116$--$0.118$ throughout. Full protocols, timing setup, and per-condition results appear in Appendix~\ref{app:scalability}.

\section{Conclusion}
\label{sec:conclusion}

We proposed AP-FAE, an unsupervised framework for temporally misaligned
functional data that jointly learns nonlinear amplitude and phase
representations, applying to both univariate and multivariate settings with a
primary focus on the latter. By encoding all channels into separate amplitude and phase
embeddings, decoding channel-specific amplitude functions in canonical time, and
modeling phase variation with a shared monotone, endpoint-preserving warp,
AP-FAE disentangles shape from timing under a reconstruction-only objective.
This yields compact, interpretable embeddings that are not tied to a downstream
task and support flexible use such as amplitude-based clustering. We also
established a theoretical bound linking amplitude recovery to registration,
reconstruction, and noise errors, and validated it numerically. Across synthetic
and six real-world benchmarks, AP-FAE achieved strong clustering, alignment, and
reconstruction, outperforming state-of-the-art baselines on most metrics;
notably, clustering on amplitude embeddings alone consistently surpassed
clustering on joint amplitude--phase embeddings, confirming the benefit of
explicit disentanglement.

Our framework has limitations. It assumes a continuous, positive-velocity warp
shared across channels, and its performance degrades when the true warps contain
long constant segments or upward jumps, or when observations have severe
missingness or strongly uneven sampling; in such cases, stable
clustering and low average reconstruction error should not be interpreted as
recovery of the true warp. In addition, the amplitude and phase embedding sizes
are currently fixed rather than learned from data. Future work includes
data-adaptive embedding sizes, richer warp families that accommodate
discontinuities and subject-specific grids, and downstream tasks such as
functional regression using amplitude embeddings, phase embeddings, or both.

\newpage
\subsection*{AI use statement}

In this work, we used generative ChatGPT and Deepseek for polishing the writing
of the manuscript and for assisting with code implementation. We have not used
generative AI tools for research ideation, method design, or the analysis and
interpretation of experimental results, and the remaining required disclosure
tasks are not applicable to this work. We have reviewed all AI-assisted work.
Specifically, AI-assisted text edits were limited to improving grammar and
clarity, and every edit was checked by the authors to ensure that the technical
content and claims were unchanged. All AI-assisted code was manually reviewed,
tested, and verified for correctness by the authors before being used in any
experiment. We take responsibility for the final content of this work,
including text, claims or artifacts produced with the aid of generative AI.

\subsection*{Reproducibility statement}

We have made efforts to ensure the reproducibility of our results. The full
source code of AP-FAE, including training and evaluation scripts, is available
at the anonymous repository linked in the abstract. The method is described in
Section~\ref{sec:model}, with algorithms and implementation details in
Appendix~\ref{app:algorithms} and Appendix~\ref{app:implementation}. Complete
proofs and assumptions for the theoretical results in Section~\ref{sec:insight}
are given in Appendix~\ref{app:amplitude-recovery}. The experimental setup, including
datasets, baselines, and evaluation metrics, is given in
Section~\ref{sec:real-world} and Appendix~\ref{app:evaluation-metrics}. Model sizes and baseline settings are listed in
Appendix~\ref{app:simulation-protocol}, the synthetic data generation procedure is in
Appendix~\ref{app:simulation-design}.

\FloatBarrier
\bibliographystyle{plainnat}
\bibliography{references}

\clearpage
\appendix
\renewcommand{\theequation}{A.\arabic{equation}}
\setcounter{equation}{0}
\renewcommand{\thetable}{A\arabic{table}}
\setcounter{table}{0}
\renewcommand{\thefigure}{A\arabic{figure}}
\setcounter{figure}{0}

\section{Details of Algorithms and Complexity}
\label{app:algorithms-complexity}

\subsection{Algorithms}
\label{app:algorithms}

\begin{algorithm}[!htbp]
\caption{Time Warping}
\label{alg:warp}
\begin{algorithmic}[1]
\Require Phase embedding $\mathbf{p}_i$; network parameters $\psi$;
         grid $0=t_1<\cdots<t_M=1$
\Ensure Inverse warp $\{\widehat{\omega}_i(t_m)\}_{m=1}^M$
\State Compute network outputs
       $\rho_{im}\gets \mathcal{G}_\psi(t_m,\mathbf{p}_i)$
       for $m=1,\ldots,M$
\State Compute stabilized positive densities
       $u_{im}\gets\exp\{\rho_{im}-\max_\ell\rho_{i\ell}\}$
       for $m=1,\ldots,M$
\State Compute trapezoidal increments
       $\displaystyle
       \Delta_{im}\gets
       \frac{u_{im}+u_{i,m+1}}{2}(t_{m+1}-t_m)$
       for $m=1,\ldots,M-1$
\State Accumulate integrals
       $\displaystyle H_{i1}\gets 0,\qquad
       H_{im}\gets\sum_{\ell=1}^{m-1}\Delta_{i\ell}$
       for $m=2,\ldots,M$
\State Normalize
       $\widehat{\omega}_i(t_m)\gets H_{im}/H_{iM}$
       for $m=1,\ldots,M$
\State \Return inverse warp
       $\{\widehat{\omega}_i(t_m)\}_{m=1}^M$
\end{algorithmic}
\end{algorithm}

\begin{algorithm}[!htbp]
\caption{Training AP-FAE}
\label{alg:twfae}
\begin{algorithmic}[1]
\Require Observations \(\{x_{id}(t_m)\}_{i=1,d=1,m=1}^{N,D,M}\); latent dimensions \(r,q\); number of B-spline bases \(J\)
\Ensure Fitted model, reconstructions and embeddings
\State Initialize encoder parameters \(\theta\), amplitude-decoder parameters \(\phi\), and phase-decoder parameters \(\psi\)
\State Precompute B-spline basis values
\While{stopping criterion is not satisfied}
    \State \((\mathbf{a}_i,\mathbf{p}_i)\gets \mathcal{E}_\theta(\mathbf{x}_i)\)
           for \(i=1,\ldots,N\)
    \State \(\widehat f_{id}\gets \mathcal{D}_\phi(\mathbf{a}_i)\)
           for \(i=1,\ldots,N\), \(d=1,\ldots,D\)
    \State \(\widehat\omega_i\gets\Call{MonotoneWarp}{\mathbf{p}_i,\psi}\)
           using Algorithm~\ref{alg:warp}
    \State \(\widehat x_{id}(t_m)\gets
           \Call{LinearInterp}{\widehat f_{id},\widehat\omega_i(t_m)}\)
    \State \(\displaystyle
       \mathcal{L}_{\mathrm{rec}}\gets
       \frac{1}{NDM}\sum_{i=1}^N\sum_{d=1}^D\sum_{m=1}^M
       [x_{id}(t_m)-\widehat x_{id}(t_m)]^2\)
    \State Update \(\theta,\phi,\psi\) using AdamW and \(\nabla\mathcal{L}_{\mathrm{rec}}\)
\EndWhile
\State \Return embeddings \(\{\mathbf{a}_i,\mathbf{p}_i\}\) and functions
       \(\{\widehat f_{id},\widehat\omega_i\}\)
\end{algorithmic}
\end{algorithm}

\subsection{Phase Decoder Implementation}
\label{app:implementation}
We discretize Eq.~\eqref{eq:normalized-integrated-exponential}
on the common grid $0=t_1<\cdots<t_M=1$ using cumulative trapezoidal integration.
A two-hidden-layer ReLU MLP of width $L$ produces
$\rho_{im}=G_\psi(t_m,p_i)$.
For numerical stability, we set
$u_{im}=\exp\{\rho_{im}-\max_{\ell}\rho_{i\ell}\}$
and compute
\[
\Delta_{im}
=
\frac{u_{im}+u_{i,m+1}}{2}(t_{m+1}-t_m),
\qquad m=1,\ldots,M-1.
\]
The estimated inverse warp is
\begin{equation}
\widehat{\omega}_i(t_1)=0,
\qquad
\widehat{\omega}_i(t_m)
=
\frac{\sum_{\ell=1}^{m-1}\Delta_{i\ell}}
     {\sum_{\ell=1}^{M-1}\Delta_{i\ell}},
\qquad m=2,\ldots,M.
\label{eq:discrete_inverse_warp}
\end{equation}
Positive increments ensure strict monotonicity, and normalization
gives $\widehat{\omega}_i(1)=1$.
Subtracting the maximum stabilizes exponentiation without changing
the normalized warp.

At each $\widehat{\omega}_i(t_m)$, we linearly interpolate the two
neighboring values of $\widehat{f}_{id}$ on the amplitude-decoder grid.
The interpolation locations and weights are shared across channels,
and gradients pass through both the amplitude values and the warp
locations.

Our experiments use AdamW with learning rate $10^{-3}$ and weight
decay $10^{-4}$. Model sizes and baseline settings are listed in
Section~C.2. For downstream clustering, the 20 K-means initializations
are independent clustering restarts and do not affect neural
initialization.

\subsection{Computational and Space Complexity}
\label{app:complexity}

Let \(N\) be the number of samples, \(D\) the number of functional channels,
\(M\) the number of grid points, and \(J\) the number of B-spline bases.
Let \(K\) be the number of functional encoder features, \(S\) the minibatch size,
\(r\) and \(q\) the amplitude and phase latent dimensions, and \(L\) the hidden width
of the warping network.

\paragraph{Time complexity.}
For dense grid-based encoding and coefficient-based amplitude decoding, one epoch costs
\[
\mathcal{O}\!\left(
\left\lceil \frac{N}{S}\right\rceil K D J M
+
N\Bigl[
DK(M+J)+DJM+DM
+K(2r+q)
+M(Lq+L^2)
\Bigr]
\right).
\]
The first term evaluates the learned encoder weight functions on the grid after each
parameter update. The remaining terms account for functional encoding, amplitude
decoding, time-warping evaluation, interpolation, and reconstruction loss.
Backpropagation has the same asymptotic order. For fixed grid resolution, basis size,
network widths, and minibatch size, the cost scales linearly in both \(N\) and \(D\).
The shared warping network avoids an additional factor of \(D\) in the phase pathway.

\paragraph{Space complexity.}
Writing \(P\) for the number of trainable parameters, minibatch training requires
\[
\mathcal{O}\!\left(
P+MJ+DKM
+
S\bigl[DM+DJ+ML+K+r+q\bigr]
\right)
\]
working memory. This includes model and optimizer storage, basis evaluations,
encoder weight functions, and intermediate activations. Storing the full dataset
additionally requires \(\mathcal{O}(NDM)\) memory. The shared warp needs only
\(\mathcal{O}(SM)\) interpolation locations and weights, while channel-specific
inputs and reconstructions require \(\mathcal{O}(SDM)\).

\section{Details on the Theory}
\label{app:amplitude-recovery}

\begin{proof}[Proof of Theorem~\ref{thm:amplitude-recovery-noise}]
Recall that $\mathbf{x}_i=\mathbf{f}_i\circ\omega_i+\boldsymbol e_i$
is the observed curve and
$\widehat{\mathbf{x}}_i=\widehat{\mathbf{f}}_i\circ\widehat{\omega}_i$
its reconstruction. Here $\mathbf{f}_i$ is the true amplitude curve,
$\omega_i$ is the true inverse warp, and $\boldsymbol\nu_i$ is measurement
noise. The quantities in the theorem are
\[
\epsilon_g=\max_i\|\widehat{\omega}_i-\omega_i\|_\infty,
\qquad
\epsilon_x=\left\{\frac1N\sum_{i=1}^N
\|\widehat{\mathbf{x}}_i-\mathbf{x}_i\|_{L^2}^2\right\}^{1/2},
\qquad
\tau=\left\{\frac1N\sum_{i=1}^N
\|\boldsymbol e_i\|_{L^2}^2\right\}^{1/2}.
\]
Thus $\epsilon_g$ measures the largest inverse-warp error, while
$\epsilon_x$ and $\tau$ are the root mean squared $L^2$ reconstruction
error and noise level across subjects. The reconstruction error is measured
against the noisy observations. The constants $L$ and $C$ are uniform
Lipschitz bounds for the amplitude curves $\mathbf{f}_i$ and fitted
inverse warps $\widehat{\omega}_i$, respectively. All $L^2$ norms below
are taken over $[0,1]$ and sum over the $D$ channels, as in the theorem.

Write $R_i=\widehat{\mathbf{x}}_i-\mathbf{x}_i$ for the reconstruction
residual. Since $\widehat{\omega}_i$ is a strictly increasing continuous
bijection of $[0,1]$, substitution of
$t=\widehat{\omega}_i^{-1}(s)$ gives
\[
\begin{aligned}
\widehat{\mathbf{f}}_i(s)-\mathbf{f}_i(s)
&=\widehat{\mathbf{x}}_i\{\widehat{\omega}_i^{-1}(s)\}-\mathbf{f}_i(s)\\
&=R_i\{\widehat{\omega}_i^{-1}(s)\}
 +\boldsymbol\nu_i\{\widehat{\omega}_i^{-1}(s)\}\\
&\quad+\mathbf{f}_i\{\omega_i(\widehat{\omega}_i^{-1}(s))\}
 -\mathbf{f}_i(s).
\end{aligned}
\]
The last term is bounded using the Lipschitz condition on $\mathbf{f}_i$.
Indeed, $s=\widehat{\omega}_i(t)$ ranges over $[0,1]$ as $t$ does, so
\[
\sup_{s\in[0,1]}
\left|\omega_i\{\widehat{\omega}_i^{-1}(s)\}-s\right|
=\sup_{t\in[0,1]}|\omega_i(t)-\widehat{\omega}_i(t)|
\leq\epsilon_g.
\]
Consequently,
\[
\begin{aligned}
\|\mathbf{f}_i\circ\omega_i\circ\widehat{\omega}_i^{-1}
  -\mathbf{f}_i\|_{L^2}^2
&=\int_0^1\|\mathbf{f}_i\{\omega_i(\widehat{\omega}_i^{-1}(s))\}
             -\mathbf{f}_i(s)\|_2^2\,ds\\
&\leq L^2\int_0^1
 \left|\omega_i\{\widehat{\omega}_i^{-1}(s)\}-s\right|^2\,ds
\leq L^2\epsilon_g^2.
\end{aligned}
\]

For the residual and noise terms, the $C$-Lipschitz condition implies
that $\widehat{\omega}_i$ is absolutely continuous, with
$0\leq\widehat{\omega}_i'(t)\leq C$ almost everywhere.
Changing variables by $s=\widehat{\omega}_i(t)$ yields, for any
$\mathbf{u}\in L^2([0,1];\mathbb{R}^D)$,
\[
\begin{aligned}
\|\mathbf{u}\circ\widehat{\omega}_i^{-1}\|_{L^2}^2
&=\int_0^1\|\mathbf{u}\{\widehat{\omega}_i^{-1}(s)\}\|_2^2\,ds\\
&=\int_0^1\|\mathbf{u}(t)\|_2^2\widehat{\omega}_i'(t)\,dt
\leq C\|\mathbf{u}\|_{L^2}^2.
\end{aligned}
\]
Applying this estimate to $R_i$ and $\boldsymbol\nu_i$ in the
decomposition above, and then using the triangle inequality, gives
\[
\begin{aligned}
\|\widehat{\mathbf{f}}_i-\mathbf{f}_i\|_{L^2}
&\leq\|R_i\circ\widehat{\omega}_i^{-1}\|_{L^2}
 +\|\boldsymbol e_i\circ\widehat{\omega}_i^{-1}\|_{L^2}
 +L\epsilon_g\\
&\leq\sqrt C\,\|R_i\|_{L^2}
 +\sqrt C\,\|\boldsymbol e_i\|_{L^2}+L\epsilon_g.
\end{aligned}
\]
Taking the root mean square over subjects and applying Minkowski's
inequality in $\mathbb{R}^N$, we obtain
\[
\begin{aligned}
\left\{\frac1N\sum_{i=1}^N
 \|\widehat{\mathbf{f}}_i-\mathbf{f}_i\|_{L^2}^2\right\}^{1/2}
&\leq\left\{\frac1N\sum_{i=1}^N
 \left(\sqrt C\,\|R_i\|_{L^2}
 +\sqrt C\,\|\boldsymbol e_i\|_{L^2}+L\epsilon_g\right)^2
 \right\}^{1/2}\\
&\leq\sqrt C\left\{\frac1N\sum_{i=1}^N
 \|R_i\|_{L^2}^2\right\}^{1/2}
 +\sqrt C\left\{\frac1N\sum_{i=1}^N
 \|\boldsymbol e_i\|_{L^2}^2\right\}^{1/2}
 +L\epsilon_g\\
&=L\epsilon_g+\sqrt C\,(\epsilon_x+\tau).
\end{aligned}
\]
This proves the stated bound. The argument holds for each realization
of the noise and requires no distributional assumptions on
$\boldsymbol e_i$.
\end{proof}

\section{Details of Experiments}
\label{app:experimental-settings}

\subsection{Evaluation Metrics}
\label{app:evaluation-metrics}

Let \(N\), \(M\), and \(D\) denote the sample size, number of grid points, and
number of channels.

\paragraph{Clustering.}
Let \(N_{k\ell}=\sum_i\mathbf 1\{y_i=k,\widehat y_i=\ell\}\),
\(n_k=\sum_\ell N_{k\ell}\), and \(m_\ell=\sum_k N_{k\ell}\).
ACC uses optimal one-to-one label matching \(\mathcal P\):
\begin{equation}
  \operatorname{ACC}
  =\frac{1}{N}\max_{\mathcal P}\sum_{(k,\ell)\in\mathcal P}N_{k\ell}.
  \label{eq:app-metric-acc}
\end{equation}

\paragraph{Registration.}
For aligned observations \(A_i=[x_{id}\{\widehat \omega^{-1}_i(t_m)\}]_{m,d}\),
define \(\overline A_k=n_k^{-1}\sum_{i:y_i=k}A_i\) and
\(S_k=\sum_{i:y_i=k}\|A_i-\overline A_k\|_F^2\).
We use the pairwise ATV criterion of \citet{jiang2025deepfrc},
\begin{equation}
  \operatorname{ATV}
  =\frac{2}{K(K-1)}\sum_{k<\ell}
  \frac{S_k+S_\ell}
  {(n_k+n_\ell)\|\overline A_k-\overline A_\ell\|_F},
  \label{eq:app-metric-atv}
\end{equation}
where \(K\) is the number of true classes and the Frobenius norm covers all
grid points and channels. ATV uses class labels for evaluation but requires no
true warping functions.

\paragraph{Reconstruction.}
For \(\widehat x_{id}(t_m)=\widehat f_{id}\{\widehat \omega_i(t_m)\}\),
\begin{equation}
  \operatorname{MSE}
  =\frac{1}{NMD}\sum_{i=1}^N\sum_{m=1}^M\sum_{d=1}^D
  [x_{id}(t_m)-\widehat x_{id}(t_m)]^2.
  \label{eq:app-metric-mse}
\end{equation}

\subsection{Baseline Implementation Details}
\label{app:simulation-protocol}

\begin{itemize}[leftmargin=*,itemsep=3pt]
\item \textbf{FAE:} A functional autoencoder without phase variation, using the
reported functional encoder--decoder settings \citep{WuBeaulacCao2024}.
\item \textbf{FAEclust:} Official PyTorch implementation with elastic distance,
architecture \(16\)--\(8\)--\(4\)--\(8\)--\(16\)--\(16\)--\(16\)\citep{singh2025faeclust}.
\item \textbf{SrvfRegNet:} Local reproduction with a shared-warp multichannel
extension and Adam; provides alignment-only entries \citep{Chen2021}.
\item \textbf{SRC:} Scalar specialization for univariate examples and a
coordinate-list extension of the author-supplied likelihood for trivariate
curves \citep{zeng2019src}; both estimate a shared subject-specific warp without
class labels.
\item \textbf{JPCCA:} Spline-mixture reimplementation following the published
model \citep{gaffney2004jpcca}.
\item \textbf{Funclust and funHDDC:} Published R implementations
\citep{JACQUES2013164,BouveyronJacques2011funHDDC}.
\end{itemize}

\subsection{Synthetic Data Generation}
\label{app:simulation-design}

We adapt the basis-expansion construction of \citet{HsiehSunWangHonavar2021},
using Gaussian basis coefficients to produce individual curves around class
templates and adding a shared time warp. The main comparison contains three
classes of trivariate curves (\(D=3\)), with \(100\) subjects per class observed
at \(100\) equally spaced points on \([0,1]\).

\paragraph{Observed curves.}
For subject \(i\) and channel \(d\),
\begin{equation}
  x_{id}(t_j)=f_{id}\{\omega_i(t_j)\}+\epsilon_{idj},
  \qquad \epsilon_{idj}\overset{\mathrm{iid}}{\sim}N(0,0.10^2),
  \label{eq:sim-observed-d3}
\end{equation}
where \(f_{id}\) is the amplitude curve before warping, \(g_i^{-1}\) changes
its timing, and \(\epsilon_{idj}\) is independent measurement noise.

\paragraph{Time warping.}
Each subject has one warp shared by all channels:
\begin{equation}
  \omega_i(t)=
  \begin{cases}
    \{\exp(b_i t)-1\}/\{\exp(b_i)-1\}, & b_i\ne0,\\
    t, & b_i=0,
  \end{cases}
  \qquad b_i\sim\mathrm{Unif}(-B,B).
  \label{eq:sim-warp-d3}
\end{equation}
As in the model, \(\omega_i(t)\) maps observed time to aligned time. The map is
increasing and fixes both endpoints. We vary \(B\in\{0.75,1.25\}\) to control
the range of time distortions. Warps are independent of the amplitude
coefficients and class membership, so timing supplies no class information.

\paragraph{Amplitude curves and basis coefficients.}
The amplitude curve is a linear combination of three harmonics,
\begin{equation}
  f_{id}(t)=\sum_{h=1}^3 c_{idh}\phi_{dh}(t),
  \qquad d=1,\ldots,D.
  \label{eq:sim-amplitude-d3}
\end{equation}
For a subject in class \(k\), draw the coefficient vector across channels,
\(\boldsymbol c_{ih}=(c_{i1h},c_{i2h},c_{i3h})^\top\), from
\[
  \boldsymbol c_{ih}\sim
  N_3\!\left(\beta_{kh}\boldsymbol 1_3,
                  (0.20\beta_{kh})^2\Sigma_\rho\right),
  \qquad
  \Sigma_\rho=
  \begin{pmatrix}
    1&\rho&\rho\\
    \rho&1&\rho\\
    \rho&\rho&1
  \end{pmatrix},
\]
where \(\boldsymbol 1_3=(1,1,1)^\top\). Draws are independent across subjects
and harmonics. Each coefficient has class-specific mean \(\beta_{kh}\) and
standard deviation \(0.20|\beta_{kh}|\); coefficients of the same harmonic have
correlation \(\rho\) across channels. The three class mean vectors are fixed
across repetitions:
\begin{align*}
  \boldsymbol\beta_1&=(0.9175,\ 0.3195,\ -0.1380),\\
  \boldsymbol\beta_2&=(0.9175,\ -0.1205,\ 0.1260),\\
  \boldsymbol\beta_3&=(0.6150,\ 0.3910,\ -0.2480).
\end{align*}
We vary \(\rho\in\{0.2,0.6\}\) while keeping marginal coefficient variances
fixed. Here \(\rho\) describes coefficient correlation, rather than correlation
between observed curves. The channel-specific harmonics are
\begin{align*}
  \boldsymbol\phi_1(t)
    &= \{\sin(2\pi t),\ \sin(4\pi t),\ \cos(6\pi t)\},\\
  \boldsymbol\phi_2(t)
    &= \{0.75\sin(2\pi t+0.35),\ 0.95\sin(4\pi t+0.55),\
         0.80\cos(6\pi t+0.15)\},\\
  \boldsymbol\phi_3(t)
    &= \{0.90\sin(2\pi t-0.22),\ -0.65\sin(4\pi t+0.30),\
         0.50\cos(6\pi t-0.40)\}.
\end{align*}

\subsection{Numerical Validation of the Amplitude Recovery Bound}
\label{app:bound-validation}

We evaluate Eq.(\ref{eq:amplitude-recovery-bound-noise}) on 50  AP-FAE
fits for each of the four simulation settings. Amplitude and registration errors
are measured in the original generating time coordinate; reconstruction error is
measured against the noisy observations, and \(\tau\) is computed from the corresponding noise realizations. Table~\ref{tab:sim-amplitude-bound} compares the amplitude
root mean squared \(L^2\) error with
\(\epsilon=L\epsilon_g+\sqrt{C}\,(\epsilon_x+\tau)\). All \(200\) fits satisfy the bound.
Across settings, the mean amplitude error ranges from \(0.214\) to \(0.253\),
compared with mean bounds of \(0.434\)--\(0.523\), so the bound is conservative
in these simulations. These results provide direct numerical validation of
\Cref{thm:amplitude-recovery-noise} and confirm that the derived bound holds in
practice.

\begin{table}[!htbp]
\centering
\small
\setlength{\tabcolsep}{10pt}
\begin{tabular}{cccccc}
\toprule
\(B\) & \(\rho\) & \(L\) & \(C\) & Amplitude error & Upper bound \(\epsilon\) \\
\midrule
0.75 & 0.2 & 17.402 & 1.680 & 0.253 & 0.523 \\
0.75 & 0.6 & 18.006 & 1.746 & 0.252 & 0.523 \\
1.25 & 0.2 & 17.402 & 1.971 & 0.221 & 0.451 \\
1.25 & 0.6 & 18.006 & 2.020 & 0.214 & 0.434 \\
\bottomrule
\end{tabular}
\caption{Numerical assessment of the amplitude recovery bound for the four
trivariate simulation settings. Entries are means over 50 fits per
setting. Amplitude error denotes root mean squared \(L^2\) error.}
\label{tab:sim-amplitude-bound}
\end{table}



\subsection{Details of the Robustness Experiments}
\label{app:robustness_experiments}

\paragraph{Robustness to Irregular Sampling and Missing Observations.}
We evaluated robustness to random missingness and unevenly spaced observations in the trivariate synthetic setting with
\(B=0.75\) and \(\rho=0.2\). Each experiment was repeated 50 times.
Table~\ref{tab:irregular_sampling} reports clustering accuracy (ACC), adjusted
total variance (ATV), and reconstruction error (MSE) against the noise-free signal on
a dense reference grid.

\begin{table}[H]
\centering
\caption{Robustness to irregular sampling after retraining
(\(B=0.75\), \(\rho=0.2\)). Values are means over 50 repetitions.}
\label{tab:irregular_sampling}
\small
\begin{tabular}{lrrr}
\hline
Sampling condition & ACC \(\uparrow\) & ATV \(\downarrow\) & MSE \(\downarrow\) \\
\hline
Complete uniform grid & \(0.924\) & 1.187 & 0.0047 \\
Random missingness: 20\% & \(0.924\) & 1.177 & 0.0043 \\
Random missingness: 50\% & \(0.935\) & 1.181 & 0.0048 \\
Random missingness: 80\% & \(0.932\) & 1.212 & 0.0072 \\
Uneven sampling: \(\alpha_{\mathrm{samp}}=2\) & \(0.922\) & 1.185 & 0.0056 \\
Uneven sampling: \(\alpha_{\mathrm{samp}}=4\) & \(0.922\) & 1.275 & 0.0302 \\
\hline
\end{tabular}
\end{table}

\paragraph{Extreme Warping Cases.}
We further considered continuous warps containing short constant segments and
nondecreasing warps containing upward jumps. All curves were observed on the
complete grid. Each experiment was repeated 50 times.
Table~\ref{tab:extreme_warping} reports ACC, ATV, and reconstruction
MSE.

\begin{table}[H]
\centering
\caption{Sensitivity to extreme true warps after retraining
(\(B=0.75\), \(\rho=0.2\)). Values are means over 50 repetitions.}
\label{tab:extreme_warping}
\small
\begin{tabular}{lrrr}
\hline
Warp configuration & ACC \(\uparrow\) & ATV \(\downarrow\) & MSE \(\downarrow\) \\
\hline
Strictly increasing baseline & \(0.924\) & 1.187 & \(0.00470\) \\
One plateau: length 0.04 & \(0.933\) & 1.236 & \(0.00523\) \\
One plateau: length 0.08 & \(0.920\) & 1.346 & \(0.00756\) \\
One plateau: length 0.10 & \(0.929\) & 1.357 & \(0.00833\) \\
Two plateaus: length 0.04 each & \(0.915\) & 1.281 & \(0.00616\) \\
One upward jump: \(\Delta_{\mathrm{J}}=0.02\) & \(0.924\) & 1.211 & \(0.00506\) \\
One upward jump: \(\Delta_{\mathrm{J}}=0.05\) & \(0.927\) & 1.307 & \(0.00662\) \\
\hline
\end{tabular}
\end{table}

\paragraph{Experimental Setup.}
All robustness experiments used the same synthetic setting: 
three channels, three classes, 100 subjects per class, 
\(\rho=0.2\), \(B=0.75\), and 50 repetitions. 
AP-FAE was retrained from scratch under every condition. 
Training used1500 epochs, minibatch size 32, AdamW with learning rate \(10^{-3}\) 
and weight decay \(10^{-4}\), 20 basis functions, hidden width 16, 
and four-dimensional amplitude and phase codes. 
Clustering was performed once at the final checkpoint; ACC used true labels only for evaluation.

\paragraph{Irregular Sampling and Missing Observations.}
The two sampling mechanisms isolate different practical failures. 
Random missingness models scattered observation loss, 
such as intermittent sensor or recording failures. 
Uneven sampling models a valid but highly nonuniform design, 
such as measurements concentrated near the end of follow-up, 
and tests whether nominal sample size can mask poor temporal coverage.

Random missingness removed 20\%, 50\%, or 80\% of 
the coordinates. The two endpoints were always retained.

For uneven sampling, interior coordinates were selected 
without replacement from a 1001-point candidate grid 
and combined with the two endpoints. Selection weights were proportional to
\[
  t^{\alpha_{\mathrm{samp}}-1},
\]
where \(\alpha_{\mathrm{samp}}=2\) and \(4\) generated 
moderate and strong concentration toward the end of the domain.  
Within a replication, all subjects and channels shared the observation grid;

\paragraph{Warps with Constant Segments.}
A constant segment represents a temporary pause in latent progress: 
observed time advances while canonical time remains unchanged. 
It can also approximate saturation, rest, or repeated measurements of an unchanged latent state. 
This setting tests the strict positive-velocity assumption, 
because the true warp has zero derivative on a nonempty interval. 
The three single-plateau lengths vary the duration of one pause, 
while the two-plateau condition contrasts one contiguous pause with
 two separated pauses having the same total duration of 0.08.

For disjoint plateau intervals \([\ell_k,r_k]\), define
\[
h_{\mathrm{P}}(t)=
\frac{t-\sum_k\min\{\max(t-\ell_k,0),\,r_k-\ell_k\}}
     {1-L_{\mathrm{P}}},
\qquad
L_{\mathrm{P}}=\sum_k(r_k-\ell_k),
\]
and set
\[
  {\omega}_{\text{plateau}}(t)={\omega}\!\left(h_{\mathrm{P}}(t)\right).
\]
The resulting warp is continuous and nondecreasing, 
has constant segments, and satisfies \({\omega}_{\text{plateau}}(0)=0\) and \({\omega}_{\text{plateau}}(1)=1\). 
We tested one plateau of length 0.04, 0.08, or 0.10, 
and two plateaus of length 0.04 each. 
For each subject, the single-plateau centre was sampled from
a 0.01-spaced grid in \([0.30,0.70]\) and held fixed across the three lengths. 
The two-plateau centres were sampled from the corresponding grids in 
\([0.20,0.35]\) and \([0.65,0.80]\). 
Locations were generated independently of class, 
and all three channels shared the subject-specific warp.

\paragraph{Discontinuous Warps.}
We also tested warps with one upward jump. This represents a sudden advance in
latent time and violates the model's continuity assumption. We first modified
the time coordinate using
\[
h_{\mathrm{J}}(t)=(1-\Delta_{\mathrm{J}})t+
\Delta_{\mathrm{J}}\,\mathbf{1}_{\{t\geq t_{\mathrm{J}}\}},
\qquad
  {\omega}_{\text{jump}}(t)={\omega}\!\left(h_{\mathrm{J}}(t)\right).
\]
Here, $t_{\mathrm{J}}$ is the jump location and
$\Delta_{\mathrm{J}}$ controls its size. We used
$\Delta_{\mathrm{J}}=0.02$ and $0.05$. For each subject, we sampled one jump
location between observation times in $(0.30,0.70)$ and used that same location
for both jump sizes. The two conditions therefore differed only in the jump
size. The resulting warp remains nondecreasing and satisfies ${\omega}_{\text{jump}}(0)=0$ and
${\omega}_{\text{jump}}(1)=1$, but it is discontinuous.

\subsection{Simulation without phase variation}
\label{app:simulation-b0}

We set \(B=0\) in the trivariate simulation, so the true warp is the identity,
while retaining \(\rho\in\{0.2,0.6\}\) and fixing \(K=3\).
Table~\ref{tab:simulation-b0} compares AP-FAE with FAE and FAEclust using ACC,
ATV, and MSE. AP-FAE and FAE achieve similar ACC; their MSEs are equal at
\(\rho=0.2\), while AP-FAE attains lower MSE at \(\rho=0.6\).
Figure~\ref{fig:simulation-b0-rho060} illustrates the latter setting.

\begin{table}[!htbp]
\centering
\small
\setlength{\tabcolsep}{5pt}
\renewcommand{\arraystretch}{1.08}
\begin{tabular*}{\linewidth}{@{\extracolsep{\fill}}llccc@{}}
\toprule
Settings & Metric & FAE & FAEclust & AP-FAE \\
\midrule
$\rho=0.2, B=0$ & ACC $\uparrow$ & 0.950 & 0.729 & 0.946 \\
 & ATV $\downarrow$ & -- & -- & 1.164 \\
 & MSE $\downarrow$ & \textbf{0.015} & 0.354 & \textbf{0.015} \\
\addlinespace
$\rho=0.6, B=0$ & ACC $\uparrow$ & \textbf{0.916} & 0.766 & \underline{0.915} \\
 & ATV $\downarrow$ & -- & -- & 1.139 \\
 & MSE $\downarrow$ & \underline{0.013} & 0.354 & \textbf{0.011} \\
\bottomrule
\end{tabular*}
\caption{Trivariate simulation with \(B=0\). Bold and underline mark the best
and second-best distinct values when multiple methods report a metric; ``--''
denotes not applicable.}
\label{tab:simulation-b0}
\end{table}

\subsection{Scalability}
\label{app:scalability}

We examine the effects of sample size and grid resolution on AP-FAE in the trivariate setting with $B=0.75$ and $\rho=0.2$. Table~\ref{tab:sample-size-comparison} shows that increasing the sample size reduces reconstruction error overall, while clustering accuracy and alignment remain relatively stable. Table~\ref{tab:grid-size-comparison} shows only modest changes in reconstruction error and clustering accuracy across grid resolutions. The increase in raw ATV largely reflects its approximately $\sqrt{M}$-dependent scale (see \eqref{eq:app-metric-atv}); ATV normalized by $\sqrt{M}$ remains between $0.1160$ and $0.1187$. Training times for $1500$ epochs on one NVIDIA RTX~4090 (batch size $32$) include per-epoch reconstruction checks and exclude file I/O. Over the tested range, time grows approximately linearly with sample size but changes little with grid resolution.

\begin{table}[!htbp]
\centering
\small
\setlength{\tabcolsep}{9pt}
\begin{tabular}{rrcccr}
\toprule
Per class & Total $N$ & MSE $\downarrow$ & ACC $\uparrow$ & ATV $\downarrow$ & Time (s) \\
\midrule
30     & 90     & 0.016             & 0.949             & \underline{1.142} & 19.62 \\
50     & 150    & 0.014             & \textbf{0.963}    & \textbf{1.115}    & 31.19 \\
100    & 300    & 0.014             & 0.924             & 1.187             & 62.77 \\
1,000  & 3,000  & \underline{0.011} & \underline{0.953} & 1.166             & 555.09 \\
10,000 & 30,000 & \textbf{0.011}    & 0.952             & 1.180             & 5,422.98 \\
\bottomrule
\end{tabular}
\caption{AP-FAE across sample sizes for $D=3$, $B=0.75$, $\rho=0.2$, and $M=100$. Bold and underline indicate the best and second-best distinct values for MSE, ACC, and ATV.}
\label{tab:sample-size-comparison}
\end{table}

\begin{table}[!htbp]
\centering
\small
\setlength{\tabcolsep}{12pt}
\begin{tabular}{rcccr}
\toprule
Grid points $M$ & MSE $\downarrow$ & ACC $\uparrow$ & ATV $\downarrow$ & Time (s) \\
\midrule
30    & \textbf{0.013}    & \underline{0.942}    & \textbf{0.643}    & 60.82 \\
50    & \underline{0.013} & \textbf{0.952}             & \underline{0.830} & 60.33 \\
100   & 0.014           & 0.924             & 1.187             & 62.77 \\
1,000 & 0.013             & 0.935 & 3.670             & 65.09 \\
\bottomrule
\end{tabular}
\caption{AP-FAE across grid resolutions for $D=3$, $B=0.75$, $\rho=0.2$, and $100$ subjects per class. Bold and underline indicate the best and second-best distinct values for MSE, ACC, and ATV.}
\label{tab:grid-size-comparison}
\end{table}

\newpage

\subsection{Simulation visualization}
\label{app:simu-visu}

\begin{figure}[!htbp]
\centering
\includegraphics[width=\linewidth]{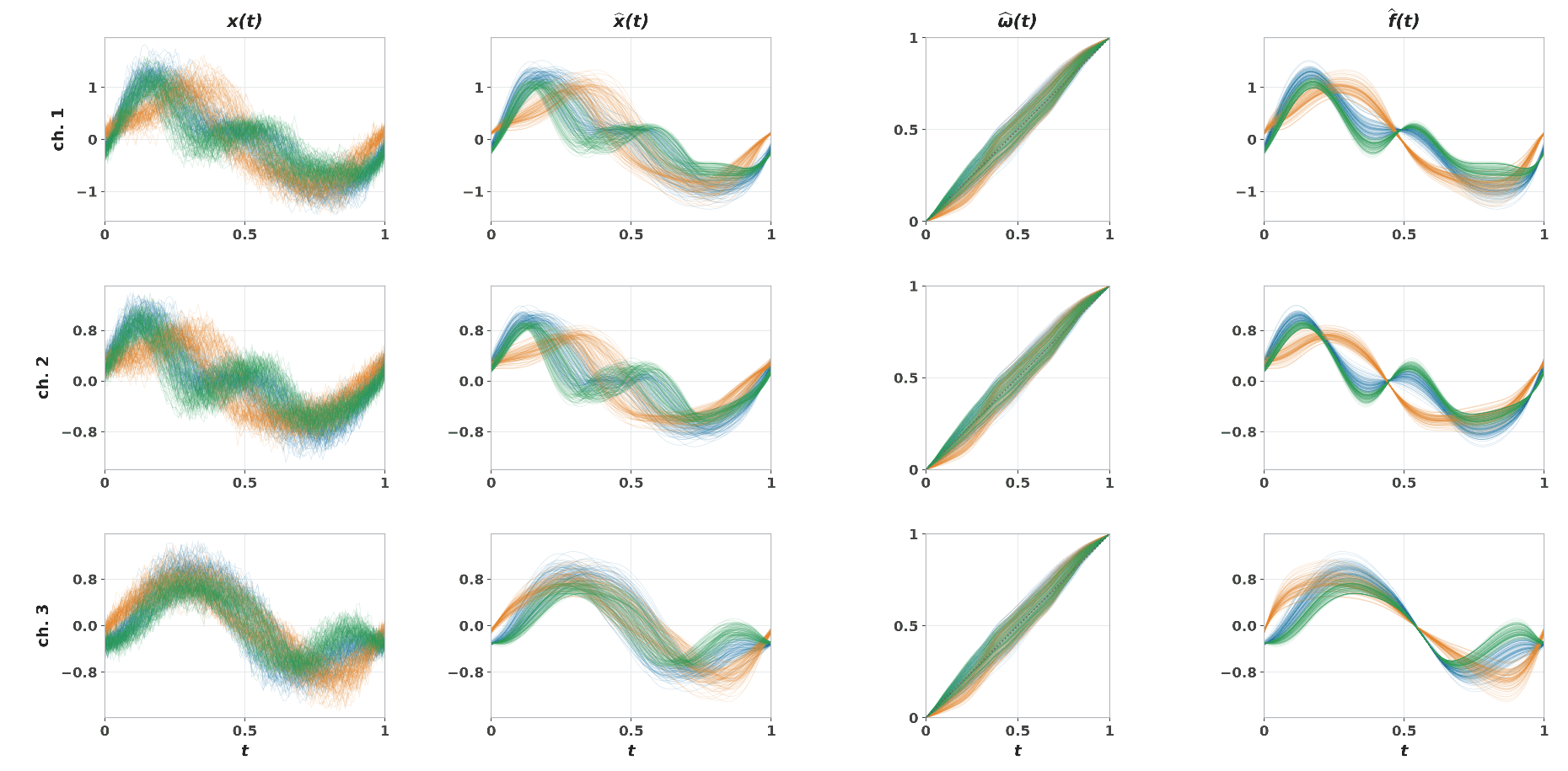}
\caption{Trivariate AP-FAE fit ($\rho=0,2, B=0.75$). Rows are channels; colours indicate classes and dotted lines denote identity.}
\label{fig:sim-d3-workflow}
\end{figure}

\begin{figure}[!htbp]
\centering
\includegraphics[width=\linewidth]{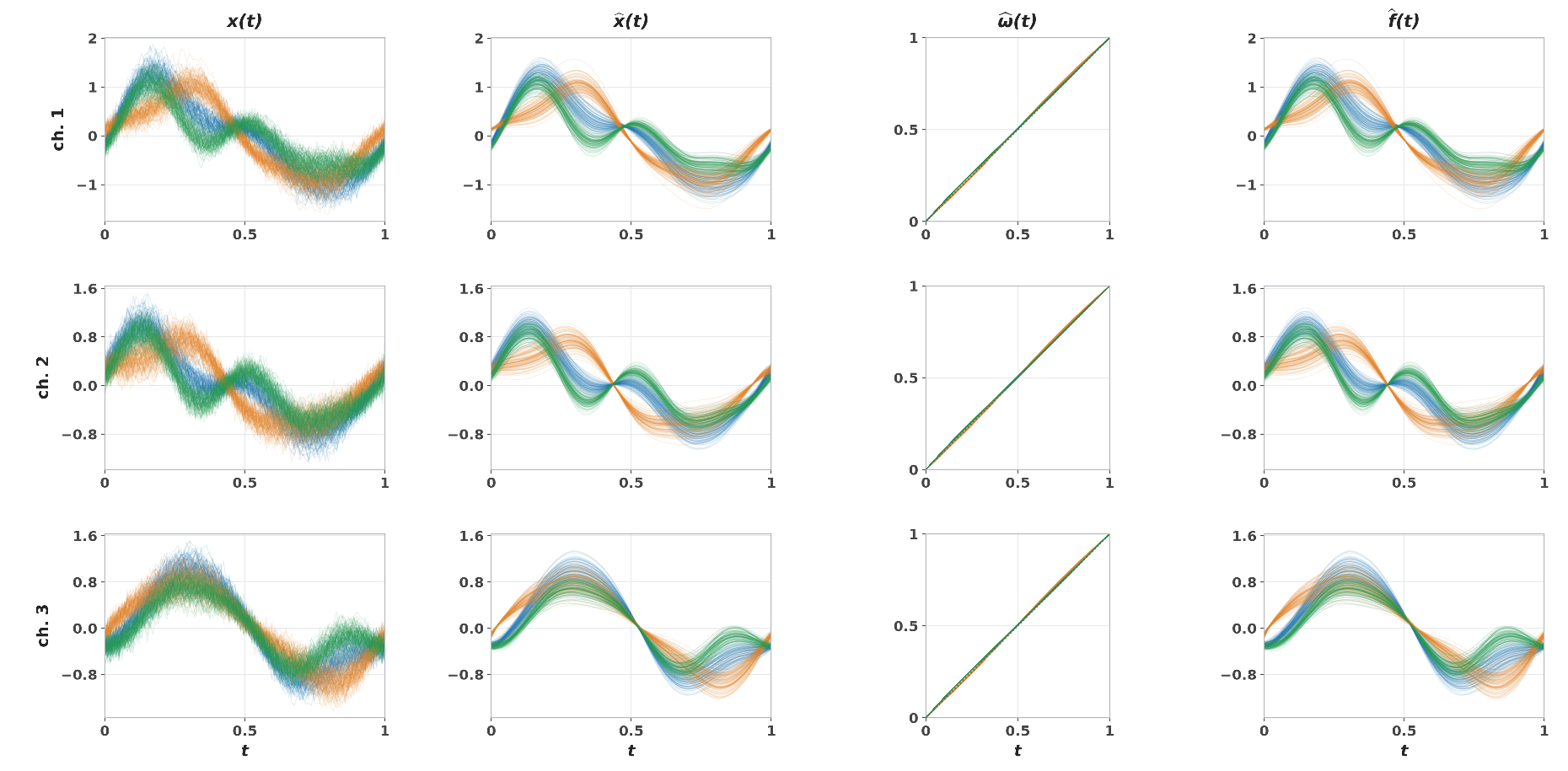}
\caption{AP-FAE fit with $B=0$ and $\rho=0.6$.}
\label{fig:simulation-b0-rho060}
\end{figure}
\FloatBarrier

\newpage

\subsection{Real-data visualizations}
\label{app:real-visualizations}

The following figures visualize all the real datasets. Columns show observed curves, reconstructions, learned inverse warps, and aligned curves. Colours indicate classes; dotted lines denote identity. 

\begin{figure}[!htbp]
\centering
\includegraphics[width=\linewidth]{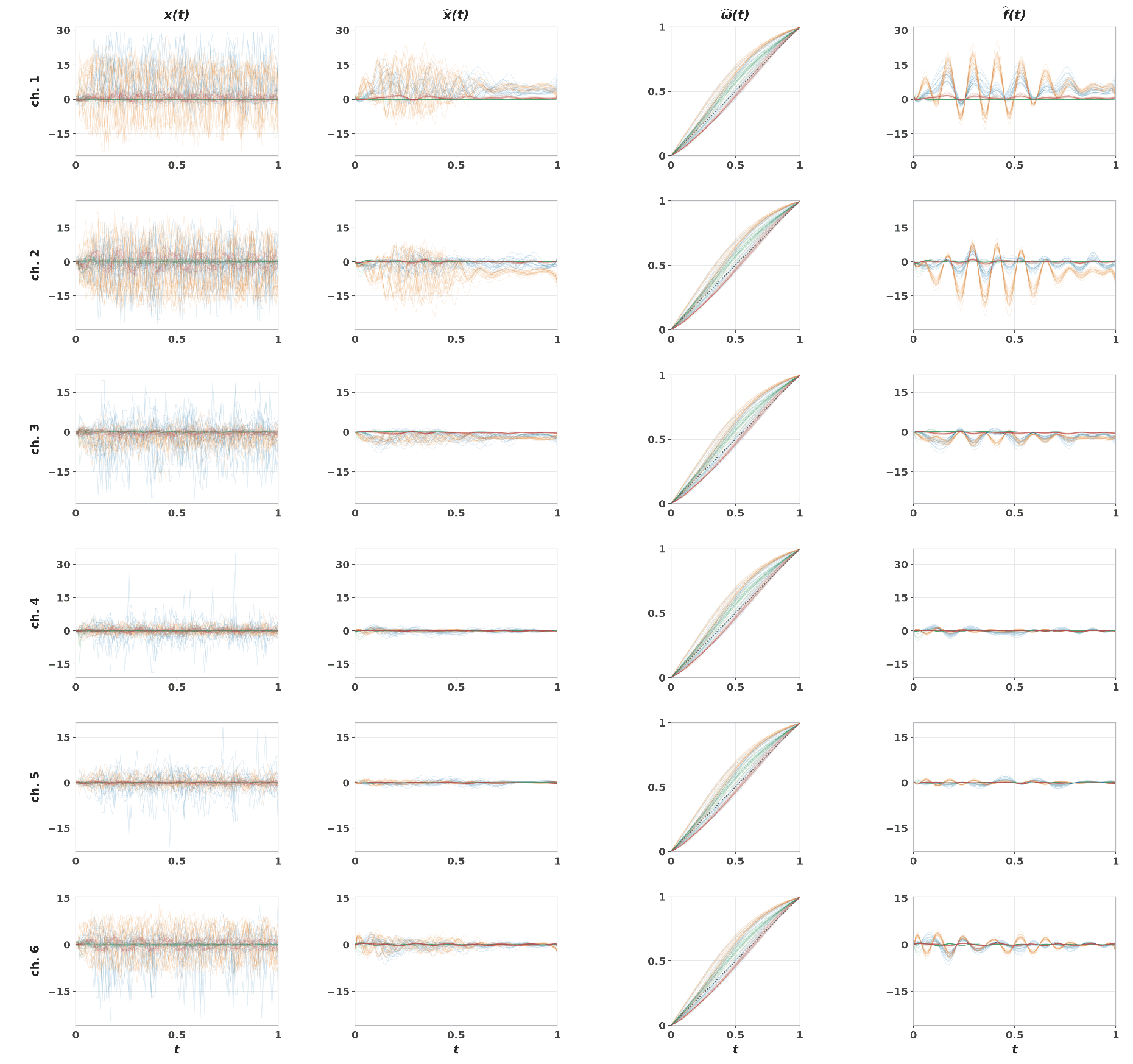}
\caption{AP-FAE fit on BasicMotions ($D=6$)}
\label{fig:realdata-basicmotions-workflow}
\end{figure}

\begin{figure}[!htbp]
\centering
\includegraphics[width=\linewidth]{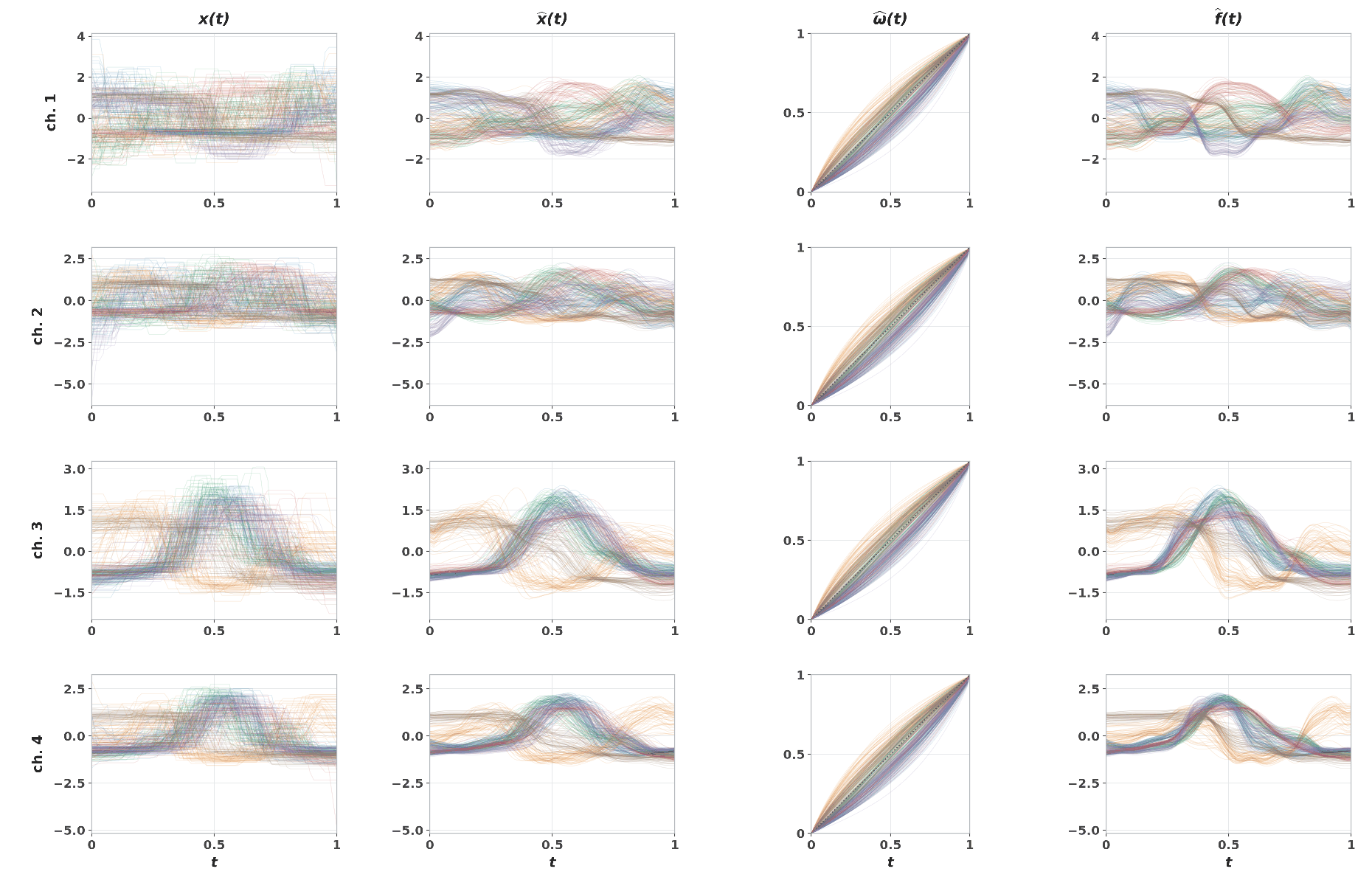}
\caption{AP-FAE fit on ERing ($D=4$)}
\label{fig:realdata-ering-workflow}
\end{figure}

\begin{figure}[!htbp]
\centering
\includegraphics[width=\linewidth]{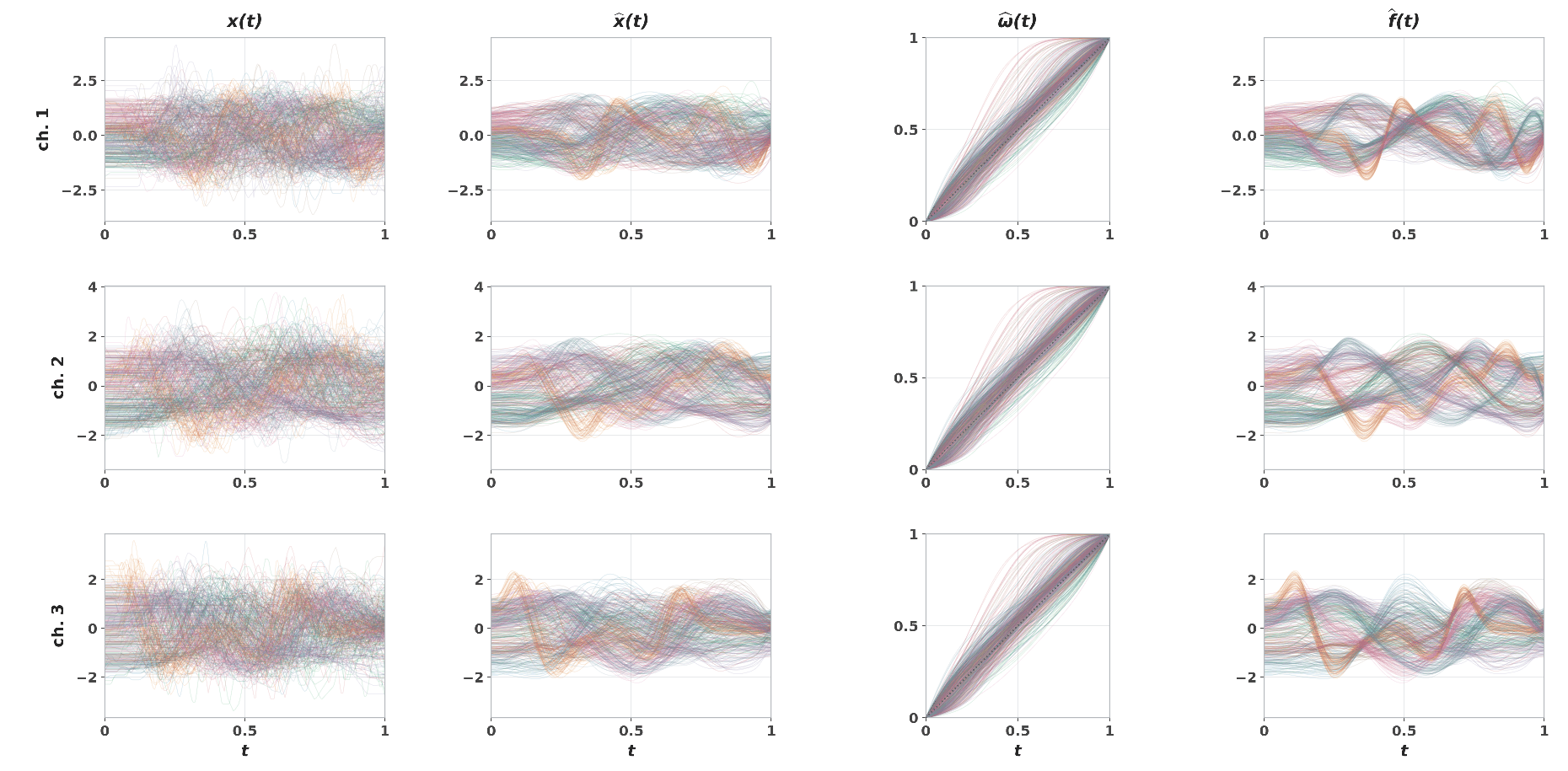}
\caption{AP-FAE fit on UWaveGestureLibrary ($D=3$)}
\label{fig:uwave}
\end{figure}

\begin{figure}[!htbp]
\centering
\includegraphics[width=\linewidth]{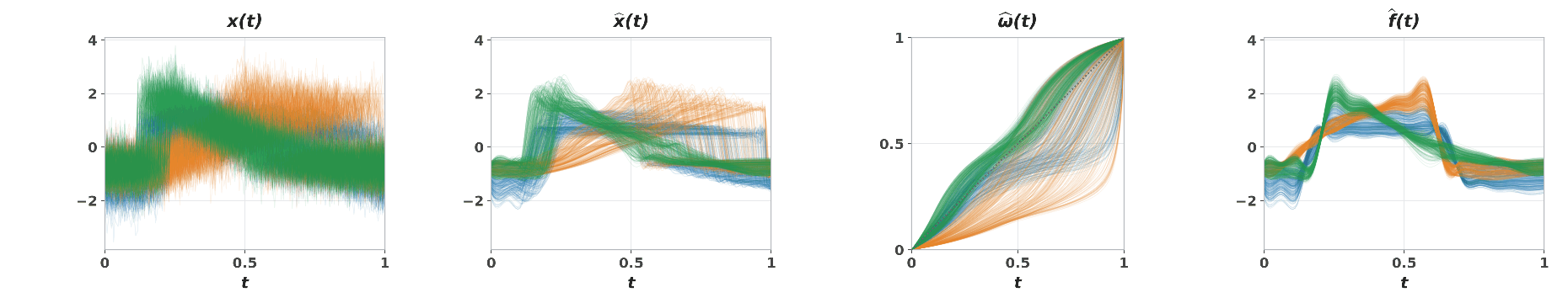}
\caption{AP-FAE fit on CBF ($D=1$).}
\label{fig:realdata-cbf-workflow}
\end{figure}

\begin{figure}[!htbp]
\centering
\includegraphics[width=\linewidth]{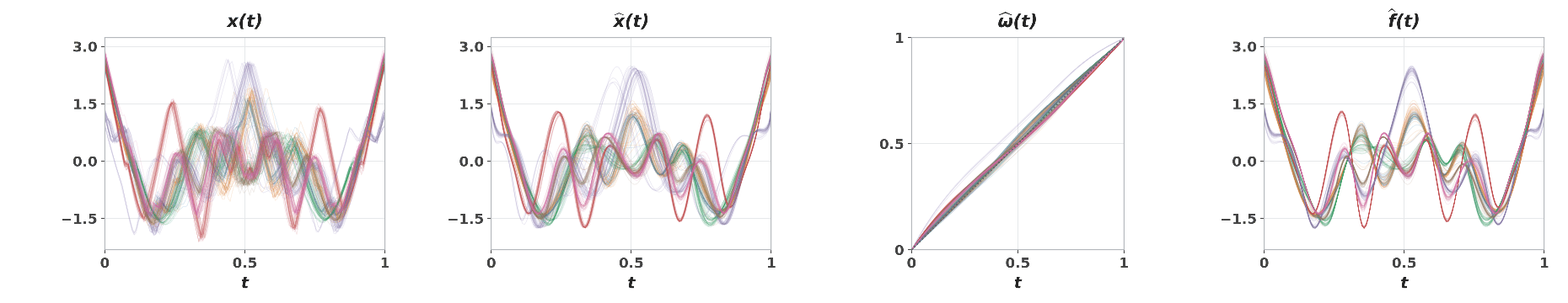}
\caption{AP-FAE fit on Plane ($D=1$).}
\label{fig:realdata-plane-workflow}
\end{figure}

\begin{figure}[!htbp]
\centering
\includegraphics[width=\linewidth]{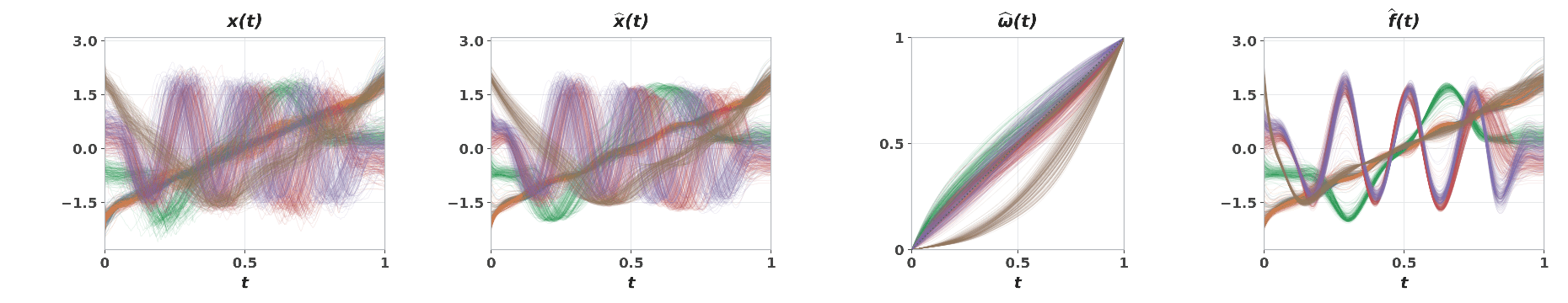}
\caption{AP-FAE fit on Symbols ($D=1$).}
\label{fig:realdata-symbols-workflow}
\end{figure}

\end{document}